\documentclass[runningheads]{llncs}

\usepackage{eccv}

\usepackage{eccvabbrv}

\usepackage{graphicx}
\usepackage{booktabs}
\usepackage{algorithm}
\usepackage{algorithmic}
\usepackage{microtype}
\usepackage{graphicx}
\usepackage{threeparttable}
\usepackage{subcaption}
\usepackage{booktabs} 
\usepackage{multirow}
\usepackage{comment}
\usepackage{tabularx}
\usepackage{xcolor}
\usepackage{makecell}
\usepackage{colortbl}
\usepackage{enumitem}
\usepackage{amsmath}
\usepackage{amssymb}
\usepackage{mathtools}
\usepackage{amsmath,amsfonts,bm}
\usepackage{esdiff}

\usepackage{color, colortbl}
\definecolor{LightCyan}{rgb}{0.88,1,1}
\usepackage{color,soul}
\usepackage[accsupp]{axessibility}  

\usepackage{hyperref}
\usepackage[capitalize,noabbrev]{cleveref}   
\usepackage{orcidlink}

\usepackage{amsmath,amsfonts,bm}
\usepackage{esdiff}

\newcommand{\TTA}[0]{Test Time Adaptation\xspace}
\newcommand{\tta}[0]{test time adaptation\xspace}
\newcommand{\ttaab}[0]{TTA\xspace}

\newcommand{\deyoabb}[0]{DeYO\xspace}
\newcommand{\ens}[0]{Naive\xspace}
\newcommand{\grad}[0]{Grad\xspace}
\newcommand{\svgd}[0]{SVGD\xspace}
\newcommand{\kl}[0]{KL\xspace}

\def\figref#1{\cref{#1}}

\def\eqref#1{\cref{#1}}

\def\1{\bm{1}}

\def\rvp{{\mathbf{p}}}

\def\rvx{{\mathbf{x}}}

\def\vtheta{{\bm{\theta}}}

\def\vf{{\bm{f}}}

\def\gD{{\mathcal{D}}}

\def\gX{{\mathcal{X}}}
\def\gY{{\mathcal{Y}}}

\newcommand{\softmax}{\mathrm{softmax}}

\begin{document}

\title{Multi-Hypothesis Test-Time Adaptation to Mitigate Underspecification} 

\titlerunning{Multi-Hypothesis TTA}


\author{Afshar Shamsi\inst{1} \and
Xiao-Yu Guo\inst{2} \and
Hamid Alinejad-Rokny\inst{3} \\
Arash Mohammadi\inst{1} \and
Damien Teney\inst{4} \and
Ehsan Abbasnejad\inst{5} }



\authorrunning{A.~Shamsi et al.}

\institute{Concordia University, Canada\\ 
\email{\{afshar.shamsi,arash.mohammadi\}@concordia.ca}\and
Australian Institute for Machine Learning, Australia\\
\email{xiaoyu.guo@gmail.com}\and
University of New South Wales, Australia\\
\email{h.alinejad@unsw.edu.au}\and
Idiap Research Institute, Switzerland\\
\email{damien.teney@idiap.ch}\and
Monash University, Australia\\
\email{Ehsan.Abbasnejad@monash.edu}}

\maketitle

\begin{abstract}
Test-Time Adaptation (TTA) seeks to improve model robustness under distribution shifts by adapting parameters using unlabeled target data. However, in the absence of supervision, entropy-based adaptation is fundamentally underconstrained: multiple distinct parameter updates can achieve similarly low entropy while inducing drastically different decision boundaries. This phenomenon, known as underspecification, renders standard TTA brittle and prone to collapse into spurious modes. In this work, we reinterpret TTA through a posterior-inspired lens induced by entropy minimization, where low-entropy solutions define a pseudo-likelihood over parameters. Instead of committing to a single point estimate, we introduce a particle-based diversification framework that explores multiple plausible adaptation trajectories simultaneously. Our method can be viewed as a structured exploration of multiple plausible adaptation solutions, implemented through multi-level diversification at the output, parameter, optimizer, and input levels. Crucially, the framework acts as a plug-and-play wrapper compatible with existing TTA methods. Extensive experiments on challenging benchmarks demonstrate consistent gains in stability and robustness, achieving improvements of 3–4\% under mixed shifts, 2–3\% with batch size one, and 1–2.5\% under label shifts, outperforming state-of-the-art baselines. Our results suggest that treating TTA as a multi-hypothesis inference problem, rather than a single-point optimization task, is key to mitigating underspecification and enabling reliable real-world deployment.
  \keywords{Test-Time Adaptation \and Underspecification \and Model Diversification}
\end{abstract}

%
\section{Introduction}
\label{sec:intro}

\begin{figure}[!htb]
    \includegraphics[scale = 0.5]{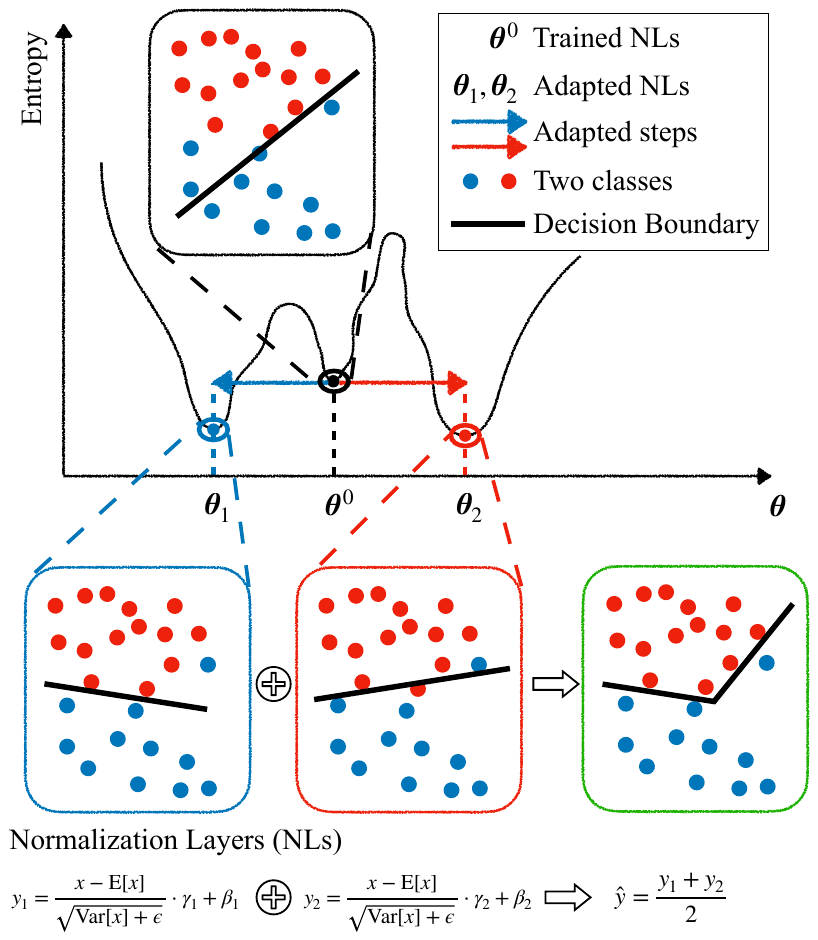}
    \centering
    \caption{Conceptual illustration of multi-hypothesis test-time adaptation. Entropy minimization can yield multiple low-entropy solutions in the adaptation landscape. Standard TTA follows a single trajectory from
    $\theta^0$, which may converge to a suboptimal decision boundary.
    Our framework instead maintains multiple adaptation particles
    ($\theta_1$, $\theta_2$) by adapting separate normalization
    parameters. Aggregating their predictions produces a more stable
    adaptation and mitigates underspecification.}
    \label{fig:motivation}
\end{figure}
Deep learning models have achieved remarkable performance across a range of tasks when training and test data are drawn from the same distribution. However, their performance often degrades sharply when deployed in the wild, where distribution shifts are inevitable \cite{quinonero2009dataset, recht2019imagenet}. Test-time adaptation (TTA) has emerged as a promising strategy to bridge this train--test gap by adapting the model to the target distribution using only unlabeled test data \cite{sun2020test, wang2021tent, zhang2022memo}. 

Despite its appeal, standard TTA optimizes a single entropy-based objective in the absence of ground-truth supervision. This renders the adaptation process fundamentally underconstrained: multiple parameter configurations can achieve similarly low entropy on the target data while corresponding to qualitatively different decision boundaries. This phenomenon, underspecification, has been shown to induce instability and spurious feature reliance in deep models. From an optimization perspective, entropy minimization may admit multiple local minima of comparable value, yet these solutions need not generalize equally well under shift. We argue that existing TTA methods implicitly compute a single maximum a posteriori (MAP) solution under an entropy-induced pseudo-likelihood. In underspecified regimes, where multiple low-entropy configurations exist, such point-estimate adaptation is inherently unstable: small perturbations in optimization trajectory can lead to qualitatively different decision boundaries. Consequently, entropy minimization under shift should be viewed not as a well-posed optimization problem, but as a multi-hypothesis inference problem.

\begin{figure*}[!htb]
    \includegraphics[width=0.9\linewidth]{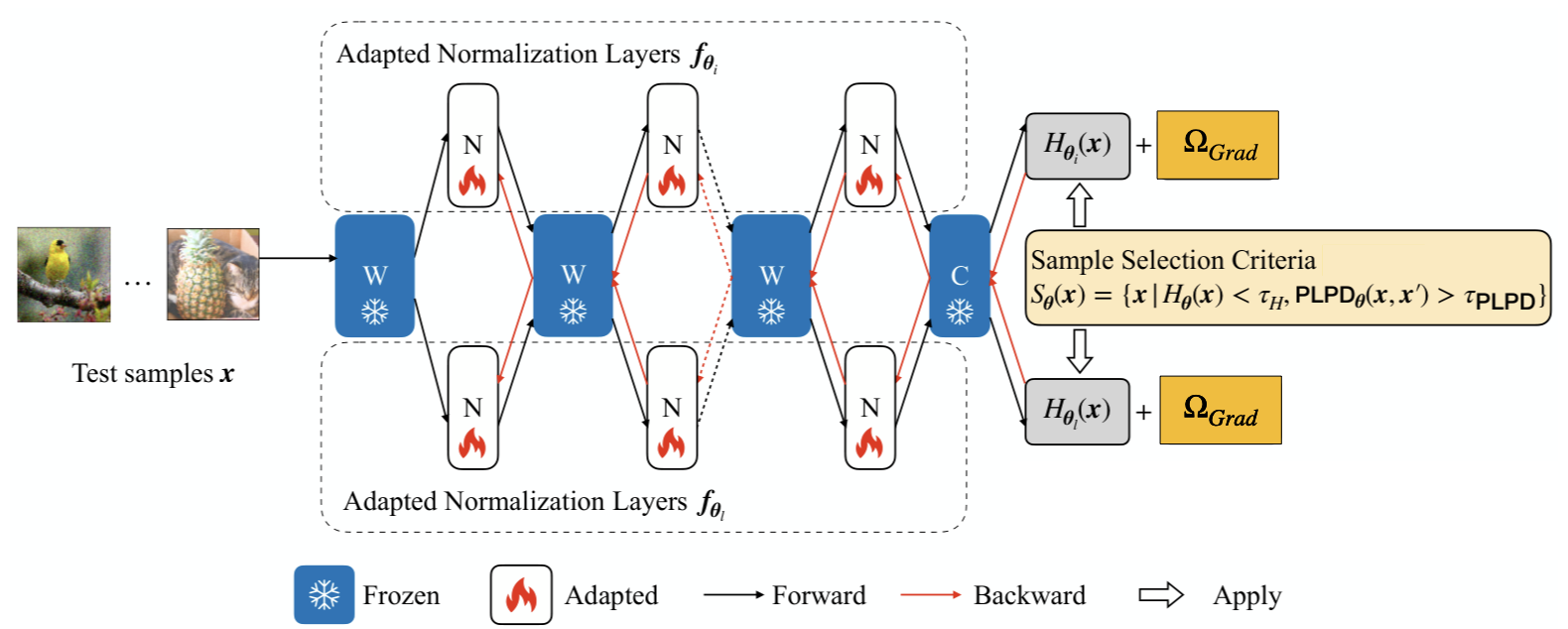}
    \centering
    \caption{An illustration of the proposed method for the case of gradient diversity diversification. During the adaptation, we only update the normalization layers (N) and keep the rest of layers (W and C) frozen. When a batch of test samples come, we first identify non-harmful ones (see Appendix~\ref{append:hyperparameter}). We only perform backward propagation on selected samples with gradient diversity measure to push normalization layers away from one another.}
    \label{fig:structure}
\end{figure*}

In this work, we reformulate TTA as a population-based inference problem and introduce a particle-driven diversification framework to approximate the posterior over adaptation parameters induced by entropy minimization. Rather than committing to a single entropy-minimizing trajectory, we maintain a population of interacting adaptation particles that explore distinct low-entropy regions of the parameter space while being regularized to prevent premature mode collapse. Each particle follows a unique optimization trajectory under entropy minimization, guided by multi-level diversification mechanisms that operate at the output, parameter, optimizer, and input levels. This perspective transforms TTA from a brittle point-estimation procedure into a controlled multi-trajectory exploration process. By preserving multiple plausible adaptation hypotheses and aggregating their predictions, our framework mitigates instability arising from underspecification and reduces reliance on spurious shortcuts. Importantly, the proposed approach is model-agnostic and acts as a plug-and-play wrapper that can enhance a broad class of existing TTA methods. We evaluate our framework on challenging benchmarks such as ImageNet-C~\cite{hendrycks2019benchmarking} under mixed corruptions, label shifts, and batch size one settings. Across all scenarios, our method consistently improves robustness and stability, outperforming state-of-the-art TTA baselines by 1–4\%. These results suggest that treating TTA as a multi-hypothesis inference problem is crucial for reliable deployment under distribution shift.

In \figref{fig:motivation}, we illustrate the entropy landscape with respect to the model's parameters $\vtheta$ for a binary classification task.
Suppose that $\vtheta^0$ represents all parameters of the normalization layers in the source model. Without adaptation, the decision boundary is likely to be inadequate under Out-of-Distribtion (OOD) conditions. At test time, \ttaab approaches typically update the model by minimizing entropy to align target data with source data.
Prior \ttaab approaches update $\vtheta^0$ to either $\vtheta_1$ or $\vtheta_2$, optimizing the model toward low-entropy directions. Although these methods can improve overall performance, their decision boundaries remain suboptimal and can introduce additional errors. These issues are closely associated with the broader challenge of underspecification, which has been mitigated in other contexts using strategies such as data augmentation and ensembling. However, their integration into \ttaab remains largely unexplored. Although \cite{deyo2024} indirectly used augmentation-based filtering to highlight robust features, this was not aimed at resolving underspecification itself. In contrast, our work directly tackles the root of underspecification by introducing a diversification-based adaptation framework that encourages representational diversity and enhances model robustness during \ttaab.

In this paper, we build on these observations to explain why existing \ttaab methods can be unreliable and to develop a more stable alternative.
Specifically, starting from a source model with normalization parameters $\vtheta^0$, we 1) initialize a collection of normalization parameter sets $\Theta = \{\vtheta_i\}_{i=1}^N$ from $\vtheta^0$ while keeping all non-normalization layers frozen (see \figref{fig:structure}), 2) use $N$ optimizers with distinct hyperparameters to adapt different particles $\vtheta_i$, and 3) apply diversification regularization to prevent particle collapse. At inference time, we aggregate predictions from the $N$ adapted parameter sets $\{\vtheta_i\}_{i=1}^N$, thereby combining multiple plausible adaptation hypotheses instead of committing to a single adapted solution. Each $\vtheta_i$ follows a distinct entropy-minimization trajectory, and their interaction encourages exploration of different low-entropy regions. Aggregating these interacting particles reduces sensitivity to spurious modes induced by underspecification. Empirically, this population-based inference consistently improves stability and robustness over single-model entropy minimization, outperforming representative \ttaab baselines including Tent~\cite{wang2021tent}, SAR~\cite{niu2023towards}, and \deyoabb~\cite{deyo2024}.

\vspace{0.15cm}
\noindent\textbf{Summary of Contributions:}
\begin{itemize}
    \item We reinterpret entropy-based test-time adaptation as an implicit posterior inference problem under underspecification, highlighting the limitations of single-point MAP adaptation.
    \item We propose a particle-based diversification framework that approximates posterior exploration over adaptation parameters through multi-level diversification (output, parameter, optimizer, and input levels).
    \item We demonstrate that controlled particle interactions mitigate collapse into spurious modes and consistently improve robustness across mixed shifts, label shifts, and batch size one scenarios.
    \item Extensive ablation studies reveal that gradient-based diversification provides the strongest stabilization effect, offering practical guidance for reliable test-time adaptation.
\end{itemize}
\section{Preliminaries}
\label{sec:background}

\subsection{\TTA}
\label{sec:tta}

Suppose that we have a training (source) dataset $\gD^{s} = \{(\rvx_j^{s}, y_j^{s})\}_{j=1}^{N^{s}}$ where $\rvx_j^{s} \in \gX^{s}$ and $y_j^{s} \in \gY^{s}$, $N^{s}$ is the number of training instances, and a testing (target) set $\gD^{t} = \{(\rvx_j^{t}, y_j^{t})\}_{j=1}^{N^{t}}$ where $\rvx_j^{t} \in \gX^{t}$ and $y_j^{t} \in \gY^{t}$, $N^{t}$ is the number of test instances.
A source model $\vf_{\vtheta}(\cdot)$ with parameters $\vtheta$ is first trained on $\gD^{s}$ and then evaluated on $\gD^{t}$. When $\gD^{t}$ involves OOD, $\vf_{\vtheta}(\cdot)$ may not generalize well. \TTA approaches adapt the model $\vf_{\vtheta}(\cdot)$ on the target dataset $\gD^{t} = \{\rvx_j^{t}\}_{j=1}^{N^{t}}$ without any labels, since labels $y_j^{t}$ are not accessible at test time.
\textcolor{black}{In this paper, we focus on \ttaab approaches that only reuse the normalization layers of the source model; therefore, without loss of generality, we use $\vtheta$ to represent all parameters of the normalization layers.}

\begin{figure}[!tb]
    \includegraphics[scale = 0.2]{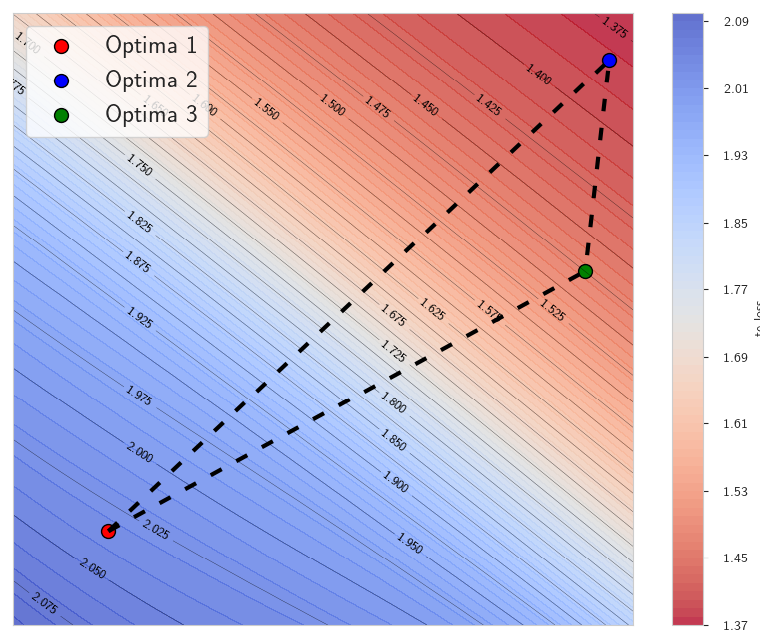}
    \centering
    \caption{The interpolation of three optima in the test loss landscape is visualized, where blue regions indicate high error/loss and red regions correspond to low error/loss.}
    \label{fig:interpolation}
\end{figure}

\subsection{Entropy Minimization}
Without the need for additional labeled data, previous \ttaab methods~\cite{wang2021tent,niu2023towards} utilize the entropy of model predictions as the objective function to adapt the model $\vf_{\vtheta}(\cdot)$:
\begin{equation}
    \min H_\vtheta (\gX^{t}) = \min \sum^{N^{t}}_{j} H(\vf_{\vtheta} (\rvx_j^{t})),
\label{eq:entropy minimization}
\end{equation}
where $H_\vtheta(\cdot)$ is entropy and measured by the Shannon entropy~\cite{shannon}:
\begin{equation}
    H_\vtheta (\rvx) = -\rvp_\vtheta(\rvx) \cdot \log \rvp_\vtheta (\rvx) = - \sum^{C}_{c} p_\vtheta(x)_c \log p_\vtheta(x)_c,
\label{eq:entropy}
\end{equation}
where $\rvp_\vtheta(\rvx) = \softmax(\vf_{\vtheta}(\rvx)) = (p_\vtheta(x)_1, ..., p_\vtheta(x)_C)$, $C$ is the number of class labels for the underlying classification task. 
Entropy is an unsupervised objective since it only depends on the model predictions instead of annotations.

\subsection{Benchmark Methods and Datasets}

For comparison, we conducted experiments on several benchmark datasets: 
\textbf{1) The ImageNet-C}~\cite{imagenetc2019} benchmark consists of 15 diverse corruption types encompassing 5 severity levels for each type applied to validation images of ImageNet. 
\textbf{2) CIFAR-10-C and CIFAR-100-C}~\cite{imagenetc2019} benchmarks provide the same 15 corruption types as ImageNet-C, but all images are derived from CIFAR-10 and CIFAR-100, respectively.
We applied our method to mild scenarios proposed by~\cite{wang2021tent} and wild scenarios proposed by~\cite{niu2023towards}.
The mild scenario is to adapt models with a batch of test samples that have the same distribution shift type and randomly shuffled label distribution.
In contrast, the wild scenario includes three more realistic settings: a) dynamic shifts in the ground-truth test label distribution, which lead to class imbalance at each corruption, b) batch size equal to one (single-sample adaptation), and c) mixtures of multiple distribution shifts.
We evaluate three representative \tta baseline methods including Tent~\cite{wang2021tent}, SAR~\cite{niu2023towards} and \deyoabb~\cite{deyo2024}, and employ ResNet-50-GN (group normalization) and ViTBase-LN (layer normalization)~\cite{vit2021} for ImageNet-C, CLIP-VitBase-Patch32-LN~\cite{clip2021} for CIFAR-10-C and CIFAR-100-C, taking into consideration \tta's utilization across various normalization layers. To further assess robustness under broader distribution shifts, we additionally evaluate on ColoredMNIST, Waterbirds, ImageNet-R~\cite{hendrycks2021many}, and VisDA-2021~\cite{bashkirova2022visda}, which capture spurious correlations and domain shifts beyond corruption-based benchmarks. Detailed results are provided in Appendix~\ref{append:beyond_corruptions}.

\section{Methodology and Results}
\label{sec:method}

\begin{table}[t]
\caption{Multi-seed DeYO solutions under batch size one (Zoom Blur, severity 5). Despite nearly identical final entropy values, different random seeds lead to substantial variance in final accuracy and distinct parameter endpoints. This empirical evidence supports the presence of underspecification in entropy-based test-time adaptation.}
\label{tab:multiseed_deyo_b1}
\centering
\small
\setlength{\tabcolsep}{6pt}
\scalebox{0.84}{
\begin{tabular}{lcccc}
\toprule
Seed & Final Entropy $\downarrow$ & Acc. (\%) $\uparrow$ & $\|\theta^{\mathrm{final}}-\theta^{0}\|_2$ & Mean Pairwise $\|\Delta \theta\|_2$ \\
\midrule
0  & 0.247 & 45.1 & 0.92 & 0.51 \\
1  & 0.241 & 34.6 & 0.88 & 0.47 \\
2  & 0.245 & 41.8 & 0.95 & 0.54 \\
3  & 0.249 & 32.9 & 0.90 & 0.49 \\
4  & 0.244 & 37.5 & 0.93 & 0.52 \\
\midrule
Mean $\pm$ Std & 0.245 $\pm$ 0.003 & 38.37 $\pm$ 4.80 & 0.92 $\pm$ 0.03 & 0.51 $\pm$ 0.03 \\
\bottomrule
\end{tabular}
}
\end{table}
Prior \ttaab approaches adapt a single model at test time by minimizing an unsupervised objective, most commonly, prediction entropy~\cite{wang2021tent}, or by combining entropy minimization with augmentation-based filtering~\cite{lee2024entropy, chen2022contrastive, deyo2024}. 
While these strategies have demonstrated promising improvements under distribution shift, they rely on updating a single parameter instance in the absence of supervision. 
As discussed in~\cref{sec:intro}, entropy minimization is inherently underconstrained: multiple parameter configurations can achieve similar low entropy while corresponding to qualitatively different decision boundaries. 
Updating a single model therefore risks drifting toward suboptimal low-entropy solutions, amplifying simplicity bias and accumulating adaptation errors over time.

To empirically evaluate the stability of single-model entropy-based adaptation, we run \deyoabb{}~\cite{deyo2024} under the batch size $1$ protocol multiple times with different random seeds while keeping the hyperparameters fixed. \Cref{tab:multiseed_deyo_b1} shows that the final entropy values are nearly identical across runs ($0.245 \pm 0.003$), yet the resulting accuracies vary substantially ($38.37 \pm 4.80$\%). Furthermore, the adapted normalization parameters converge to different endpoints in parameter space. In particular, $\|\boldsymbol{\theta}^{\mathrm{final}}-\boldsymbol{\theta}^{0}\|_2$ measures the magnitude of the parameter change from the source model, while the mean pairwise distance $\|\Delta\boldsymbol{\theta}\|_2$ denotes the average $\ell_2$ distance between final parameter vectors obtained from different seeds (computed across all seed pairs). 
These results indicate that entropy minimization does not uniquely specify a robust adaptation outcome: multiple low-entropy solutions correspond to qualitatively different decision boundaries and generalization performance. This instability highlights the need for structured mechanisms that explicitly maintain diverse adaptation hypotheses. To address this limitation, we adopt a population-based formulation of test-time adaptation. 
Instead of committing to a single adaptation trajectory, we maintain a collection of $K$ interacting adaptation particles 
\[
\Theta = \{\theta_i\}_{i=1}^{K},
\]
where each $\theta_i$ represents the normalization parameters adapted from the shared source model $\theta^0$. 
All non-normalization layers remain frozen and shared across particles, following standard \ttaab practice~\cite{wang2021tent, niu2023towards, deyo2024}. 
Each particle is initialized from $\theta^0$ and adapted online on the target stream. We define the general population adaptation objective as
\begin{equation}
\mathcal{L}(X)
=
\frac{1}{K}
\sum_{i=1}^{K}
\ell(X; \theta_i)
+
\lambda\,\Omega(\Theta),
\label{eq:pop_objective}
\end{equation}
where $\ell(X; \theta_i)$ denotes the entropy-based adaptation loss for particle $i$ (Eq.~\ref{eq:entropy minimization}), and $\Omega(\Theta)$ is a diversification regularizer that prevents premature particle collapse. The key idea is to encourage controlled exploration of multiple low-entropy regions in parameter space while maintaining interaction between particles. 
Unlike naive ensembling, where models are trained independently and averaged post hoc, our formulation explicitly couples particle dynamics during adaptation. 
Diversification therefore acts not as a static aggregation trick, but as a mechanism for mitigating underspecification during optimization.

We instantiate $\Omega(\Theta)$ through complementary diversification mechanisms operating at multiple levels: output-level, parameter-level, optimizer-level, and input-level. 
Each mechanism targets a distinct failure mode of single-model entropy minimization under distribution shift, as detailed below.

\subsection{Output-level Diversification}

Output-level diversification promotes disagreement among particles at the prediction level, preventing premature collapse to identical predictive hypotheses under entropy minimization. 
Let $p_i(X)$ denote the predictive distribution of particle $\theta_i$ on input batch $X$. 
To encourage predictive diversity, we maximize pairwise divergence between particle outputs. We define the output-level regularizer as:
\begin{equation}
\Omega_{\mathrm{KL}}(\Theta)
=
- \frac{1}{K(K-1)}
\sum_{i \neq j}
D_{\mathrm{KL}}\!\left(
p_i(X) \,\|\, p_j(X)
\right),
\label{eq:kl_div}
\end{equation}
where $D_{\mathrm{KL}}(\cdot \| \cdot)$ denotes the Kullback–Leibler divergence. The negative sign ensures that minimizing the overall objective in~\eqref{eq:pop_objective} increases predictive divergence between particles. This repulsive interaction encourages particles to occupy distinct low-entropy regions of the solution space, thereby reducing the risk of collective collapse into the same spurious decision boundary. Unlike naive ensembling, where models are trained independently and averaged post hoc, our formulation explicitly couples particle dynamics during adaptation, allowing disagreement to emerge as a controlled exploration mechanism. While naive ensembling already improves robustness over single-model adaptation, we empirically observe that independently trained particles tend to converge toward similar adaptation trajectories under entropy minimization. This trajectory alignment limits the diversity benefits of simple averaging. The proposed diversification mechanisms explicitly prevent such collapse and yield consistent gains beyond naive ensembling.

\subsection{Parameter-level Diversification}

\noindent \textbf{3.2.1 Stein Variational Gradient Descent~\cite{liu2016svgd}:} To further promote structured exploration of multiple adaptation hypotheses, we employ Stein Variational Gradient Descent (SVGD) as a particle interaction mechanism. Unlike standard gradient descent that converges to a single solution, SVGD evolves a set of particles jointly to approximate a target distribution through deterministic updates with repulsive interactions. We define a pseudo-posterior over normalization parameters as
\begin{equation}
\tilde{p}(\theta)
\propto
\exp\!\left(- \ell(X; \theta)\right)
\exp\!\left(-\frac{\gamma}{2} \|\theta - \theta^0\|_2^2 \right),
\end{equation}
where $\ell(X; \theta)$ is the entropy adaptation loss (Eq.~\ref{eq:entropy minimization}). The second exponential term acts as an implicit Gaussian prior centered at the source normalization parameters $\theta^0$, discouraging excessive drift during adaptation. The coefficient $\gamma > 0$ controls the strength of this quadratic anchoring term, balancing adaptation flexibility and retention of source-domain inductive bias. SVGD updates each particle $\theta_i$ according to
\begin{equation}
\theta_i \leftarrow
\theta_i +
\epsilon \, \phi(\theta_i),
\end{equation}
where $\epsilon > 0$ is the particle update step size, and
\begin{equation}
\phi(\theta_i)
=
\frac{1}{K}
\sum_{j=1}^{K}
\left[
\kappa(\theta_j, \theta_i)\,
\nabla_{\theta_j} \log \tilde{p}(\theta_j)
+
\nabla_{\theta_j} \kappa(\theta_j, \theta_i)
\right].
\end{equation}

\noindent Here, $K$ denotes the number of particles and $\kappa(\cdot,\cdot)$ is a positive-definite kernel (e.g., an RBF kernel) that induces repulsive interactions between particles. The first term drives particles toward low-entropy regions of the adaptation objective, while the second term prevents collapse by encouraging dispersion in parameter space. In the context of test-time adaptation, this joint update enables particles to explore multiple plausible low-entropy solutions, thereby mitigating underspecification and reducing sensitivity to spurious adaptation modes.

\noindent \textbf{3.2.2 Gradient-based Diversification:} To mitigate simplicity bias during test-time adaptation, we promote \emph{functional diversity} by encouraging particles to respond differently to input perturbations. Instead of enforcing parameter-space separation alone, we directly regularize the alignment of input gradients across particles. For particle $\theta_i$, let
\[
g_i = \nabla_X \ell(X; \theta_i)
\]
denote the gradient of the entropy loss with respect to the input batch $X$. 
We define the gradient-based diversification term as
\begin{equation}
\Omega_{\mathrm{Grad}}(\Theta)
=
- \frac{1}{K(K-1)}
\sum_{i \neq j}
\langle g_i, g_j \rangle,
\label{eq:grad_div}
\end{equation}
where $\langle \cdot , \cdot \rangle$ denotes the Euclidean inner product. Minimizing the overall objective in Eq.~\eqref{eq:pop_objective} therefore penalizes large positive gradient alignment between particles. This encourages particles to rely on different input directions during adaptation, promoting diverse decision boundaries under distribution shift. Unlike output-level disagreement, which acts on predictions, or parameter-space repulsion, which operates directly in weight space, gradient-based diversification regularizes the \emph{functional behavior} of particles. By reducing gradient alignment, particles are discouraged from collapsing toward identical adaptation trajectories, thereby improving robustness in underspecified settings. We thoroughly investigate output-level and parameter-level diversification under wild settings; results for mild settings are provided in Appendix~\ref{append:mild}.

\begin{table*}[!ht]
    \caption{Comparisons with baseline methods on ImageNet-C wild senarios at severity level 5 under batch size 1 and label shifts averaged over five random seeds regarding accuracy (\%).}
    \label{tab:imagenet-c}
    \centering
    \setlength{\tabcolsep}{1.6pt}
    \scalebox{0.485}{
        \begin{tabular}{l|ccc|cccc|cccc|cccc|c}
            \multicolumn{1}{c}{} & \multicolumn{3}{c}{Noise} & \multicolumn{4}{c}{Blur} & \multicolumn{4}{c}{Weather} & \multicolumn{4}{c}{Digital} & \\
            \textbf{Batch Size 1} & \textbf{Gauss.} & \textbf{Shot} & \textbf{Impl.} & \textbf{Defoc.} & \textbf{Glass} & \textbf{Motion} & \textbf{Zoom} & \textbf{Snow} & \textbf{Frost} & \textbf{Fog} & \textbf{Brit.} & \textbf{Contr.} & \textbf{Elastic} & \textbf{Pixel} & \textbf{JPEG} & \textbf{Avg.} \\
            \hline \hline
            ResNet-50-GN &18.01 &19.78 &17.91 &19.80 &11.41 &21.39 &24.90 &40.41 &47.29 &33.60 &69.31 &36.31 &18.59 &28.41 &52.29&30.60 \\
            {Tent}   & 28.85\textsubscript{\scriptsize{$\pm$0.4}} & 35.95\textsubscript{\scriptsize{$\pm$0.3}} & 33.49\textsubscript{\scriptsize{$\pm$0.3}} & 16.67\textsubscript{\scriptsize{$\pm$0.2}} & 7.19\textsubscript{\scriptsize{$\pm$0.3}}  & 27.35\textsubscript{\scriptsize{$\pm$0.3}} & 30.14\textsubscript{\scriptsize{$\pm$0.4}} & 17.97\textsubscript{\scriptsize{$\pm$0.2}} & 25.20\textsubscript{\scriptsize{$\pm$0.2}} & 2.27\textsubscript{\scriptsize{$\pm$0.2}}  & 71.88\textsubscript{\scriptsize{$\pm$0.1}} & 46.24\textsubscript{\scriptsize{$\pm$0.1}} & 7.36\textsubscript{\scriptsize{$\pm$0.3}}  & 52.72\textsubscript{\scriptsize{$\pm$0.3}} & 56.40\textsubscript{\scriptsize{$\pm$0.1}} & 30.65 \\
            {SAR}    & 28.51\textsubscript{\scriptsize{$\pm$0.3}} & 30.91\textsubscript{\scriptsize{$\pm$0.4}} & 29.27\textsubscript{\scriptsize{$\pm$0.1}} & 18.50\textsubscript{\scriptsize{$\pm$0.1}} & 15.34\textsubscript{\scriptsize{$\pm$0.3}} & 28.80\textsubscript{\scriptsize{$\pm$0.3}} & 30.53\textsubscript{\scriptsize{$\pm$0.2}} & 44.47\textsubscript{\scriptsize{$\pm$0.3}} & 44.27\textsubscript{\scriptsize{$\pm$0.2}} & 32.94\textsubscript{\scriptsize{$\pm$0.6}} & 72.03\textsubscript{\scriptsize{$\pm$0.2}} & 44.67\textsubscript{\scriptsize{$\pm$0.1}} & 13.16\textsubscript{\scriptsize{$\pm$2.8}} & 47.70\textsubscript{\scriptsize{$\pm$0.1}} & 56.23\textsubscript{\scriptsize{$\pm$0.1}} & 35.82 \\
            {\deyoabb}   & 41.97\textsubscript{\scriptsize{$\pm$0.7}} & 44.21\textsubscript{\scriptsize{$\pm$0.4}} & 43.03\textsubscript{\scriptsize{$\pm$0.7}} & 21.81\textsubscript{\scriptsize{$\pm$0.1}} & 23.28\textsubscript{\scriptsize{$\pm$0.3}} & 40.87\textsubscript{\scriptsize{$\pm$0.1}} & 26.10\textsubscript{\scriptsize{$\pm$4.3}} & 53.24\textsubscript{\scriptsize{$\pm$0.2}} & 51.22\textsubscript{\scriptsize{$\pm$0.2}} & 35.13\textsubscript{\scriptsize{$\pm$6.1}} & 72.39\textsubscript{\scriptsize{$\pm$0.1}} & 52.50\textsubscript{\scriptsize{$\pm$0.2}} & 42.44\textsubscript{\scriptsize{$\pm$0.3}} & 59.10\textsubscript{\scriptsize{$\pm$0.0}} & 58.73\textsubscript{\scriptsize{$\pm$0.1}} & 44.40 \\
            \hline
            {\ens}    & 42.54\textsubscript{\scriptsize{$\pm$0.4}} & 44.95\textsubscript{\scriptsize{$\pm$0.3}} & 43.61\textsubscript{\scriptsize{$\pm$0.4}}& 22.47\textsubscript{\scriptsize{$\pm$0.1}} & 23.97\textsubscript{\scriptsize{$\pm$0.2}} & 41.00\textsubscript{\scriptsize{$\pm$0.1}} & 30.90\textsubscript{\scriptsize{$\pm$0.3}} & 53.70\textsubscript{\scriptsize{$\pm$0.2}} & 51.66\textsubscript{\scriptsize{$\pm$0.1}} & 27.93\textsubscript{\scriptsize{$\pm$0.2}} & 73.37\textsubscript{\scriptsize{$\pm$0.3}} & 52.89\textsubscript{\scriptsize{$\pm$0.1}} & 47.01\textsubscript{\scriptsize{$\pm$0.2}} & 59.66\textsubscript{\scriptsize{$\pm$0.0}} & 59.53\textsubscript{\scriptsize{$\pm$0.1}} & 45.01 \\
            {\svgd}   & 42.57\textsubscript{\scriptsize{$\pm$0.3}} & 44.90\textsubscript{\scriptsize{$\pm$0.3}} & 43.79\textsubscript{\scriptsize{$\pm$0.3}} & 22.72\textsubscript{\scriptsize{$\pm$0.1}} & 24.20\textsubscript{\scriptsize{$\pm$0.2}} & 40.89\textsubscript{\scriptsize{$\pm$0.1}} & 32.04\textsubscript{\scriptsize{$\pm$0.3}} & 53.40\textsubscript{\scriptsize{$\pm$0.1}} & 51.69\textsubscript{\scriptsize{$\pm$0.1}} & 28.41\textsubscript{\scriptsize{$\pm$0.1}} & 73.34\textsubscript{\scriptsize{$\pm$0.0}} & 53.00\textsubscript{\scriptsize{$\pm$0.1}} & 46.39\textsubscript{\scriptsize{$\pm$0.1}} & 59.64\textsubscript{\scriptsize{$\pm$0.1}} & 59.62\textsubscript{\scriptsize{$\pm$0.1}}& 45.11 \\
            {\kl}     & 42.50\textsubscript{\scriptsize{$\pm$0.2}} & 44.74\textsubscript{\scriptsize{$\pm$0.2}} & 43.70\textsubscript{\scriptsize{$\pm$0.3}} & 22.50\textsubscript{\scriptsize{$\pm$0.1}} & 24.06\textsubscript{\scriptsize{$\pm$0.2}} & 40.77\textsubscript{\scriptsize{$\pm$0.1}} & 35.05\textsubscript{\scriptsize{$\pm$0.2}} & 53.67\textsubscript{\scriptsize{$\pm$0.1}} & 51.80\textsubscript{\scriptsize{$\pm$0.2}} & 32.44\textsubscript{\scriptsize{$\pm$0.1}} & 73.33\textsubscript{\scriptsize{$\pm$0.0}} & 52.97\textsubscript{\scriptsize{$\pm$0.1}} & 47.02\textsubscript{\scriptsize{$\pm$0.2}} & 59.55\textsubscript{\scriptsize{$\pm$0.1}} & 59.58\textsubscript{\scriptsize{$\pm$0.1}} & \underline{45.58} \\
            \rowcolor{LightCyan}
            \grad    & 42.73\textsubscript{\scriptsize{$\pm$0.2}} & 44.99\textsubscript{\scriptsize{$\pm$0.2}} & 43.88\textsubscript{\scriptsize{$\pm$0.2}} & 22.49\textsubscript{\scriptsize{$\pm$0.1}} & 24.32\textsubscript{\scriptsize{$\pm$0.1}} & 41.07\textsubscript{\scriptsize{$\pm$0.1}} & 25.63\textsubscript{\scriptsize{$\pm$0.2}} & 54.10\textsubscript{\scriptsize{$\pm$0.1}} & 52.03\textsubscript{\scriptsize{$\pm$0.0}} & 58.88\textsubscript{\scriptsize{$\pm$0.1}} & 73.33\textsubscript{\scriptsize{$\pm$0.1}} & 53.28\textsubscript{\scriptsize{$\pm$0.1}} & 47.69\textsubscript{\scriptsize{$\pm$0.2}} & 59.41\textsubscript{\scriptsize{$\pm$0.1}} & 59.70\textsubscript{\scriptsize{$\pm$0.0}} & \textbf{46.90} \\
            \hline
            \hline
            VitBase-LN &9.51 &6.70 &8.21 &28.88 &23.40 &33.91 &27.11 &15.90 &26.48 &47.20 &54.70 &44.11 &30.51 &44.50 &47.81 &29.91 \\
            {Tent}  & 51.15\textsubscript{\scriptsize{$\pm$0.3}} & 51.78\textsubscript{\scriptsize{$\pm$0.2}} & 52.50\textsubscript{\scriptsize{$\pm$0.2}} & 52.18\textsubscript{\scriptsize{$\pm$0.1}} & 47.68\textsubscript{\scriptsize{$\pm$0.2}} & 56.28\textsubscript{\scriptsize{$\pm$0.3}} & 49.06\textsubscript{\scriptsize{$\pm$0.3}} &  8.78\textsubscript{\scriptsize{$\pm$0.4}} & 15.92\textsubscript{\scriptsize{$\pm$0.1}} & 67.25\textsubscript{\scriptsize{$\pm$0.1}} & 73.45\textsubscript{\scriptsize{$\pm$0.2}}& 66.64\textsubscript{\scriptsize{$\pm$0.3}} & 52.09\textsubscript{\scriptsize{$\pm$0.4}} & 64.93\textsubscript{\scriptsize{$\pm$0.2}} & 63.98\textsubscript{\scriptsize{$\pm$0.1}} & 51.58 \\
            {SAR}   & 50.25\textsubscript{\scriptsize{$\pm$0.4}} & 50.65\textsubscript{\scriptsize{$\pm$0.7}} & 51.85\textsubscript{\scriptsize{$\pm$0.3}} & 51.61\textsubscript{\scriptsize{$\pm$0.2}} & 48.92\textsubscript{\scriptsize{$\pm$0.1}} & 56.71\textsubscript{\scriptsize{$\pm$0.1}} & 50.55\textsubscript{\scriptsize{$\pm$0.3}} & 20.45\textsubscript{\scriptsize{$\pm$0.4}} & 54.45\textsubscript{\scriptsize{$\pm$0.6}} & 67.42\textsubscript{\scriptsize{$\pm$0.9}} & 74.92\textsubscript{\scriptsize{$\pm$0.7}} & 65.84\textsubscript{\scriptsize{$\pm$0.0}} & 54.68\textsubscript{\scriptsize{$\pm$0.1}} & 66.57\textsubscript{\scriptsize{$\pm$0.1}} & 64.91\textsubscript{\scriptsize{$\pm$0.1}} & 55.31 \\
            {\deyoabb}  & 52.61\textsubscript{\scriptsize{$\pm$0.7}} & 53.50\textsubscript{\scriptsize{$\pm$1.7}} & 53.46\textsubscript{\scriptsize{$\pm$0.8}} & 54.81\textsubscript{\scriptsize{$\pm$0.1}} & 55.42\textsubscript{\scriptsize{$\pm$0.1}} & 61.92\textsubscript{\scriptsize{$\pm$0.1}} & 38.37\textsubscript{\scriptsize{$\pm$4.8}} & 64.55\textsubscript{\scriptsize{$\pm$0.1}} & 62.93\textsubscript{\scriptsize{$\pm$0.0}} & 70.91\textsubscript{\scriptsize{$\pm$0.1}} & 76.19\textsubscript{\scriptsize{$\pm$0.0}} & 60.14\textsubscript{\scriptsize{$\pm$0.1}} & 65.17\textsubscript{\scriptsize{$\pm$0.1}} & 71.00\textsubscript{\scriptsize{$\pm$0.1}} & 67.56\textsubscript{\scriptsize{$\pm$0.3}} & 60.57 \\
            \hline
            {\ens}   & 53.47\textsubscript{\scriptsize{$\pm$0.3}} & 54.67\textsubscript{\scriptsize{$\pm$0.2}} & 54.80\textsubscript{\scriptsize{$\pm$0.4}} & 56.17\textsubscript{\scriptsize{$\pm$0.2}} & 56.51\textsubscript{\scriptsize{$\pm$0.1}} & 62.18\textsubscript{\scriptsize{$\pm$0.1}} & 47.20\textsubscript{\scriptsize{$\pm$0.3}} & 64.91\textsubscript{\scriptsize{$\pm$0.3}} & 63.04\textsubscript{\scriptsize{$\pm$0.1}} & 70.97\textsubscript{\scriptsize{$\pm$0.1}} & 76.64\textsubscript{\scriptsize{$\pm$0.1}} & 66.20\textsubscript{\scriptsize{$\pm$0.1}} & 66.10\textsubscript{\scriptsize{$\pm$0.1}} & 71.42\textsubscript{\scriptsize{$\pm$0.1}} & 68.29\textsubscript{\scriptsize{$\pm$0.3}} & 62.17 \\
            {\svgd}  & 54.11\textsubscript{\scriptsize{$\pm$0.3}} & 54.78\textsubscript{\scriptsize{$\pm$0.2}} & 54.55\textsubscript{\scriptsize{$\pm$0.3}} & 57.01\textsubscript{\scriptsize{$\pm$0.2}} & 57.08\textsubscript{\scriptsize{$\pm$0.1}} & 62.85\textsubscript{\scriptsize{$\pm$0.1}} & 51.33\textsubscript{\scriptsize{$\pm$0.2}} & 65.01\textsubscript{\scriptsize{$\pm$0.2}} & 63.69\textsubscript{\scriptsize{$\pm$0.1}} & 72.04\textsubscript{\scriptsize{$\pm$0.1}} & 77.18\textsubscript{\scriptsize{$\pm$0.1}} & 68.04\textsubscript{\scriptsize{$\pm$0.1}} & 66.22\textsubscript{\scriptsize{$\pm$0.2}} & 71.73\textsubscript{\scriptsize{$\pm$0.1}} & 68.53\textsubscript{\scriptsize{$\pm$0.2}} & 62.94 \\
            {\kl}    & 54.83\textsubscript{\scriptsize{$\pm$0.3}} & 55.78\textsubscript{\scriptsize{$\pm$0.3}} & 55.97\textsubscript{\scriptsize{$\pm$0.2}} & 56.46\textsubscript{\scriptsize{$\pm$0.3}} & 56.84\textsubscript{\scriptsize{$\pm$0.2}} & 62.66\textsubscript{\scriptsize{$\pm$0.2}} & 54.78\textsubscript{\scriptsize{$\pm$0.2}} & 65.05\textsubscript{\scriptsize{$\pm$0.1}} & 63.69\textsubscript{\scriptsize{$\pm$0.2}} & 72.30\textsubscript{\scriptsize{$\pm$0.2}} & 77.24\textsubscript{\scriptsize{$\pm$0.1}} & 68.04\textsubscript{\scriptsize{$\pm$0.1}} & 66.00\textsubscript{\scriptsize{$\pm$0.0}} & 71.82\textsubscript{\scriptsize{$\pm$0.1}} & 68.89\textsubscript{\scriptsize{$\pm$0.2}} & \textbf{63.36} \\
            \rowcolor{LightCyan}
            \grad   & 54.83\textsubscript{\scriptsize{$\pm$0.3}} & 55.65\textsubscript{\scriptsize{$\pm$0.1}} & 56.13\textsubscript{\scriptsize{$\pm$0.2}} & 56.68\textsubscript{\scriptsize{$\pm$0.3}} & 57.13\textsubscript{\scriptsize{$\pm$0.1}} & 62.86\textsubscript{\scriptsize{$\pm$0.1}} & 50.72\textsubscript{\scriptsize{$\pm$0.1}} & 64.98\textsubscript{\scriptsize{$\pm$0.1}} & 63.48\textsubscript{\scriptsize{$\pm$0.0}} & 72.51\textsubscript{\scriptsize{$\pm$0.1}} & 77.34\textsubscript{\scriptsize{$\pm$0.1}} & 67.93\textsubscript{\scriptsize{$\pm$0.1}} & 66.01\textsubscript{\scriptsize{$\pm$0.0}} & 72.20\textsubscript{\scriptsize{$\pm$0.1}} & 68.76\textsubscript{\scriptsize{$\pm$0.0}} & \underline{63.14} \\
            \hline
        \end{tabular}
    }

    \smallskip

    \setlength{\tabcolsep}{1.6pt}
    \scalebox{0.485}{
        \begin{tabular}{l|ccc|cccc|cccc|cccc|c}
            \multicolumn{1}{c}{} & \multicolumn{3}{c}{Noise} & \multicolumn{4}{c}{Blur} & \multicolumn{4}{c}{Weather} & \multicolumn{4}{c}{Digital} & \\
            \textbf{Label Shifts} & \textbf{Gauss.} & \textbf{Shot} & \textbf{Impl.} & \textbf{Defoc.} & \textbf{Glass} & \textbf{Motion} & \textbf{Zoom} & \textbf{Snow} & \textbf{Frost} & \textbf{Fog} & \textbf{Brit.} & \textbf{Contr.} & \textbf{Elastic} & \textbf{Pixel} & \textbf{JPEG} & \textbf{Avg.} \\
            \hline \hline
            ResNet-50-GN & 17.9& 19.9&17.9 &19.7 &11.3 &21.3 &24.9 &40.4 &47.4 &33.6 &69.2 &36.3 &18.7 &28.4 &52.2 &30.6 \\
            {Tent} & 1.67\textsubscript{\scriptsize{$\pm$0.2}} & 2.02\textsubscript{\scriptsize{$\pm$0.2}} & 2.01\textsubscript{\scriptsize{$\pm$0.1}} & 12.17\textsubscript{\scriptsize{$\pm$0.3}} & 2.14\textsubscript{\scriptsize{$\pm$0.3}} & 23.00\textsubscript{\scriptsize{$\pm$0.2}} & 13.74\textsubscript{\scriptsize{$\pm$0.4}} & 8.86\textsubscript{\scriptsize{$\pm$0.2}} & 13.69\textsubscript{\scriptsize{$\pm$0.2}} & 1.00\textsubscript{\scriptsize{$\pm$0.1}} & 73.00\textsubscript{\scriptsize{$\pm$0.3}} & 48.44\textsubscript{\scriptsize{$\pm$0.3}} & 3.51\textsubscript{\scriptsize{$\pm$0.2}} & 54.90\textsubscript{\scriptsize{$\pm$0.2}} & 57.18\textsubscript{\scriptsize{$\pm$0.3}} & 21.15 \\
            {SAR} & 28.71\textsubscript{\scriptsize{$\pm$0.1}} & 31.19\textsubscript{\scriptsize{$\pm$0.4}} & 29.39\textsubscript{\scriptsize{$\pm$0.9}} & 18.52\textsubscript{\scriptsize{$\pm$0.4}} & 15.10\textsubscript{\scriptsize{$\pm$1.2}} & 28.71\textsubscript{\scriptsize{$\pm$0.5}} & 29.84\textsubscript{\scriptsize{$\pm$3.5}} & 42.86\textsubscript{\scriptsize{$\pm$2.6}} & 44.33\textsubscript{\scriptsize{$\pm$0.4}} & 35.32\textsubscript{\scriptsize{$\pm$0.9}} & 72.07\textsubscript{\scriptsize{$\pm$0.1}} & 44.65\textsubscript{\scriptsize{$\pm$0.2}} & 14.40\textsubscript{\scriptsize{$\pm$1.9}} & 47.04\textsubscript{\scriptsize{$\pm$0.5}} & 56.22\textsubscript{\scriptsize{$\pm$0.1}} & 35.89 \\
            {\deyoabb} & 40.27\textsubscript{\scriptsize{$\pm$0.5}} & 43.58\textsubscript{\scriptsize{$\pm$0.2}} & 41.75\textsubscript{\scriptsize{$\pm$0.3}}& 22.42\textsubscript{\scriptsize{$\pm$0.0}} & 22.69\textsubscript{\scriptsize{$\pm$0.2}} & 40.93\textsubscript{\scriptsize{$\pm$0.2}} & 14.63\textsubscript{\scriptsize{$\pm$2.3}} & 53.19\textsubscript{\scriptsize{$\pm$0.4}} & 51.78\textsubscript{\scriptsize{$\pm$0.5}} & 1.03\textsubscript{\scriptsize{$\pm$0.2}} & 72.89\textsubscript{\scriptsize{$\pm$0.1}} & 52.93\textsubscript{\scriptsize{$\pm$0.2}} & 47.81\textsubscript{\scriptsize{$\pm$1.2}} & 59.48\textsubscript{\scriptsize{$\pm$0.0}} & 59.30\textsubscript{\scriptsize{$\pm$0.1}} & 41.65 \\
            \hline
            {\ens} & 41.37\textsubscript{\scriptsize{$\pm$0.3}} & 44.24\textsubscript{\scriptsize{$\pm$0.2}} & 42.43\textsubscript{\scriptsize{$\pm$0.3}} & 22.52\textsubscript{\scriptsize{$\pm$0.1}} & 23.21\textsubscript{\scriptsize{$\pm$0.2}} & 40.95\textsubscript{\scriptsize{$\pm$0.2}} & 12.46\textsubscript{\scriptsize{$\pm$0.4}} & 52.49\textsubscript{\scriptsize{$\pm$0.3}} & 51.83\textsubscript{\scriptsize{$\pm$0.4}} & 1.39\textsubscript{\scriptsize{$\pm$0.4}} & 73.30\textsubscript{\scriptsize{$\pm$0.3}} & 52.91\textsubscript{\scriptsize{$\pm$0.2}} & 47.81\textsubscript{\scriptsize{$\pm$0.3}} & 59.53\textsubscript{\scriptsize{$\pm$0.1}} & 59.59\textsubscript{\scriptsize{$\pm$0.2}} & 41.74 \\
            {\svgd} & 42.43\textsubscript{\scriptsize{$\pm$0.3}} & 44.82\textsubscript{\scriptsize{$\pm$0.3}} & 43.66\textsubscript{\scriptsize{$\pm$0.3}} & 22.98\textsubscript{\scriptsize{$\pm$0.1}} & 23.54\textsubscript{\scriptsize{$\pm$0.2}} & 40.74\textsubscript{\scriptsize{$\pm$0.2}} & 17.55\textsubscript{\scriptsize{$\pm$0.3}} & 53.11\textsubscript{\scriptsize{$\pm$0.3}} & 51.97\textsubscript{\scriptsize{$\pm$0.2}} & 4.31\textsubscript{\scriptsize{$\pm$0.3}} & 73.51\textsubscript{\scriptsize{$\pm$0.3}} & 52.97\textsubscript{\scriptsize{$\pm$0.2}} & 46.82\textsubscript{\scriptsize{$\pm$0.2}} & 59.38\textsubscript{\scriptsize{$\pm$0.1}} & 59.58\textsubscript{\scriptsize{$\pm$0.2}} & \underline{42.49} \\
            {\kl} & 42.49\textsubscript{\scriptsize{$\pm$0.3}} & 44.87\textsubscript{\scriptsize{$\pm$0.3}} & 43.56\textsubscript{\scriptsize{$\pm$0.2}} & 22.92\textsubscript{\scriptsize{$\pm$0.1}} & 23.70\textsubscript{\scriptsize{$\pm$0.2}} & 40.76\textsubscript{\scriptsize{$\pm$0.2}} & 17.59\textsubscript{\scriptsize{$\pm$0.2}} & 53.07\textsubscript{\scriptsize{$\pm$0.3}} & 52.03\textsubscript{\scriptsize{$\pm$0.2}} & 1.87\textsubscript{\scriptsize{$\pm$0.3}} & 73.38\textsubscript{\scriptsize{$\pm$0.3}} & 52.93\textsubscript{\scriptsize{$\pm$0.2}} & 46.28\textsubscript{\scriptsize{$\pm$0.1}} & 59.18\textsubscript{\scriptsize{$\pm$0.2}} & 59.28\textsubscript{\scriptsize{$\pm$0.1}} & 42.26 \\
            \rowcolor{LightCyan}
            \grad  & 42.42\textsubscript{\scriptsize{$\pm$0.2}} & 44.80\textsubscript{\scriptsize{$\pm$0.2}} & 43.64\textsubscript{\scriptsize{$\pm$0.2}} & 22.95\textsubscript{\scriptsize{$\pm$0.1}} & 23.59\textsubscript{\scriptsize{$\pm$0.2}} & 40.79\textsubscript{\scriptsize{$\pm$0.1}} & 17.53\textsubscript{\scriptsize{$\pm$0.1}} & 53.09\textsubscript{\scriptsize{$\pm$0.1}} & 51.97\textsubscript{\scriptsize{$\pm$0.0}} & 15.79\textsubscript{\scriptsize{$\pm$0.1}} & 73.50\textsubscript{\scriptsize{$\pm$0.1}} & 53.00\textsubscript{\scriptsize{$\pm$0.2}} & 46.92\textsubscript{\scriptsize{$\pm$0.0}} & 59.41\textsubscript{\scriptsize{$\pm$0.1}} & 59.61\textsubscript{\scriptsize{$\pm$0.0}} & \textbf{43.27} \\
            \hline
            \hline
            ViTBase-LN & 9.42& 6.73& 8.32&29.11 &23.41 &34.03 &27.01 &15.80 &26.30 &47.41 &54.72 &43.90 &30.50 &44.5 &47.6 &29.9 \\
            {Tent} & 53.41\textsubscript{\scriptsize{$\pm$0.2}} & 53.16\textsubscript{\scriptsize{$\pm$0.2}} & 54.36\textsubscript{\scriptsize{$\pm$0.3}} & 54.34\textsubscript{\scriptsize{$\pm$0.4}} & 52.46\textsubscript{\scriptsize{$\pm$0.4}} & 58.74\textsubscript{\scriptsize{$\pm$0.6}} & 52.83\textsubscript{\scriptsize{$\pm$0.2}} & 4.29\textsubscript{\scriptsize{$\pm$0.2}} & 7.04\textsubscript{\scriptsize{$\pm$0.1}} & 69.46\textsubscript{\scriptsize{$\pm$0.3}} & 74.92\textsubscript{\scriptsize{$\pm$0.3}} & 67.42\textsubscript{\scriptsize{$\pm$0.4}} & 59.97\textsubscript{\scriptsize{$\pm$0.2}} & 67.89\textsubscript{\scriptsize{$\pm$0.2}} & 66.13\textsubscript{\scriptsize{$\pm$0.1}} & 53.10 \\
            {SAR} & 51.90\textsubscript{\scriptsize{$\pm$0.4}} & 51.47\textsubscript{\scriptsize{$\pm$0.7}} & 53.02\textsubscript{\scriptsize{$\pm$0.3}} & 51.69\textsubscript{\scriptsize{$\pm$0.2}} & 48.57\textsubscript{\scriptsize{$\pm$0.1}}& 56.55\textsubscript{\scriptsize{$\pm$0.1}} & 50.62\textsubscript{\scriptsize{$\pm$0.3}} & 26.45\textsubscript{\scriptsize{$\pm$0.4}} & 54.29\textsubscript{\scriptsize{$\pm$0.6}} & 68.00\textsubscript{\scriptsize{$\pm$0.9}} & 74.28\textsubscript{\scriptsize{$\pm$0.7}} & 65.65\textsubscript{\scriptsize{$\pm$0.0}} & 55.44\textsubscript{\scriptsize{$\pm$0.1}} & 66.68\textsubscript{\scriptsize{$\pm$0.1}} & 65.07\textsubscript{\scriptsize{$\pm$0.0}} & 55.98 \\
            {\deyoabb} & 53.92\textsubscript{\scriptsize{$\pm$0.3}} & 54.86\textsubscript{\scriptsize{$\pm$0.7}} & 55.08\textsubscript{\scriptsize{$\pm$0.4}} & 54.78\textsubscript{\scriptsize{$\pm$0.1}} & 55.70\textsubscript{\scriptsize{$\pm$0.2}} & 61.63\textsubscript{\scriptsize{$\pm$0.1}} & 53.34\textsubscript{\scriptsize{$\pm$0.2}} & 64.31\textsubscript{\scriptsize{$\pm$0.4}}& 63.10\textsubscript{\scriptsize{$\pm$1.1}} & 70.75\textsubscript{\scriptsize{$\pm$0.3}} & 76.81\textsubscript{\scriptsize{$\pm$0.1}} & 65.90\textsubscript{\scriptsize{$\pm$0.3}}& 65.60\textsubscript{\scriptsize{$\pm$0.3}}& 71.65\textsubscript{\scriptsize{$\pm$0.2}} & 68.10\textsubscript{\scriptsize{$\pm$0.2}} & 62.37 \\
            \hline
            {\ens} & 54.11\textsubscript{\scriptsize{$\pm$0.3}} & 54.91\textsubscript{\scriptsize{$\pm$0.4}} & 55.41\textsubscript{\scriptsize{$\pm$0.4}} & 55.75\textsubscript{\scriptsize{$\pm$0.1}} & 56.25\textsubscript{\scriptsize{$\pm$0.2}} & 61.95\textsubscript{\scriptsize{$\pm$0.1}} & 56.15\textsubscript{\scriptsize{$\pm$0.1}} & 64.14\textsubscript{\scriptsize{$\pm$0.1}} & 63.03\textsubscript{\scriptsize{$\pm$0.3}} & 71.76\textsubscript{\scriptsize{$\pm$0.4}} & 76.74\textsubscript{\scriptsize{$\pm$0.2}} & 67.47\textsubscript{\scriptsize{$\pm$0.2}} & 65.54\textsubscript{\scriptsize{$\pm$0.2}} & 71.51\textsubscript{\scriptsize{$\pm$0.1}} & 67.95\textsubscript{\scriptsize{$\pm$0.2}} & 62.84 \\
            {\svgd} & 54.59\textsubscript{\scriptsize{$\pm$0.3}} & 55.41\textsubscript{\scriptsize{$\pm$0.3}} & 55.94\textsubscript{\scriptsize{$\pm$0.3}} & 56.17\textsubscript{\scriptsize{$\pm$0.1}} & 56.77\textsubscript{\scriptsize{$\pm$0.2}} & 62.50\textsubscript{\scriptsize{$\pm$0.1}} & 56.22\textsubscript{\scriptsize{$\pm$0.1}} & 64.63\textsubscript{\scriptsize{$\pm$0.2}} & 63.46\textsubscript{\scriptsize{$\pm$0.3}} & 72.32\textsubscript{\scriptsize{$\pm$0.3}} & 77.25\textsubscript{\scriptsize{$\pm$0.2}} & 67.54\textsubscript{\scriptsize{$\pm$0.2}} & 65.98\textsubscript{\scriptsize{$\pm$0.2}} & 71.98\textsubscript{\scriptsize{$\pm$0.1}} & 68.55\textsubscript{\scriptsize{$\pm$0.1}} & \underline{63.29} \\
            {\kl} & 54.56\textsubscript{\scriptsize{$\pm$0.2}} & 55.33\textsubscript{\scriptsize{$\pm$0.2}} & 55.89\textsubscript{\scriptsize{$\pm$0.2}} & 56.08\textsubscript{\scriptsize{$\pm$0.1}} & 56.52\textsubscript{\scriptsize{$\pm$0.1}} & 62.47\textsubscript{\scriptsize{$\pm$0.1}} & 56.76\textsubscript{\scriptsize{$\pm$0.1}} & 64.65\textsubscript{\scriptsize{$\pm$0.2}} & 63.57\textsubscript{\scriptsize{$\pm$0.2}} & 72.21\textsubscript{\scriptsize{$\pm$0.2}} & 77.24\textsubscript{\scriptsize{$\pm$0.1}} & 67.97\textsubscript{\scriptsize{$\pm$0.1}} & 65.81\textsubscript{\scriptsize{$\pm$0.1}} & 71.98\textsubscript{\scriptsize{$\pm$0.1}} & 68.35\textsubscript{\scriptsize{$\pm$0.1}} & 62.24 \\
            \rowcolor{LightCyan}
            \grad  & 54.52\textsubscript{\scriptsize{$\pm$0.2}} & 55.37\textsubscript{\scriptsize{$\pm$0.1}} & 55.89\textsubscript{\scriptsize{$\pm$0.1}} & 56.09\textsubscript{\scriptsize{$\pm$0.1}} & 56.74\textsubscript{\scriptsize{$\pm$0.0}} & 62.47\textsubscript{\scriptsize{$\pm$0.1}} & 57.31\textsubscript{\scriptsize{$\pm$0.1}} & 64.63\textsubscript{\scriptsize{$\pm$0.0}} & 63.55\textsubscript{\scriptsize{$\pm$0.2}} & 72.36\textsubscript{\scriptsize{$\pm$0.1}} & 77.23\textsubscript{\scriptsize{$\pm$0.1}} & 67.80\textsubscript{\scriptsize{$\pm$0.0}}& 65.92\textsubscript{\scriptsize{$\pm$0.1}} & 72.01\textsubscript{\scriptsize{$\pm$0.0}} & 68.52\textsubscript{\scriptsize{$\pm$0.1}} & \textbf{63.36} \\
            \hline
        \end{tabular}
    }
\end{table*}
\noindent\textbf{Comparison on Wild Scenario.}
We evaluate our population-based adaptation framework on ImageNet-C under three challenging wild settings: (1) batch size $1$, (2) label distribution shift, and (3) mixed distribution shifts. Results are reported in~\cref{tab:imagenet-c,tab:imagenet-c-mix-shift}.

\noindent\underline{1) Batch size 1.}
When adaptation is performed on a single test sample, entropy minimization becomes highly unstable due to the lack of batch statistics. 
Table~\ref{tab:imagenet-c} shows that simply maintaining multiple normalization particles (\texttt{Ens}) already improves robustness over single-model adaptation. 
For ViTBase-LN, the ensemble setting improves average accuracy by 2.23\% over DeYO, outperforming all baselines across most corruption types. 
ResNet-50-GN exhibits similar gains.

Introducing explicit diversification further strengthens performance. 
Among all strategies, gradient-based diversification (\texttt{Grad}) achieves the best results, improving average accuracy by 3.21\% for ViTBase-LN and 2.50\% for ResNet-50-GN over DeYO. 
Notably, under severe corruptions such as Fog, \texttt{Grad} attains the highest overall accuracy (58.88\%), indicating improved stability under extreme underspecification.

\noindent\underline{2) Label Distribution Shift.}
Under infinite class imbalance ratio~\cite{niu2023towards}, entropy minimization is prone to reinforcing dominant classes.
The ensemble baseline (\texttt{Ens}) improves over DeYO by 0.47\% on ViTBase-LN, suggesting that population-based adaptation mitigates collapse toward biased modes.
Applying diversification further enhances robustness: \texttt{Grad} achieves improvements of 0.99\% (ViTBase-LN) and 1.62\% (ResNet-50-GN) over DeYO.
These gains indicate that gradient-level functional repulsion reduces overconfidence under skewed label distributions.

\noindent\underline{3) Mixed Distribution Shifts.}
We further evaluate performance on mixtures of 15 corruption types at severity level 5.
As shown in~\cref{tab:imagenet-c-mix-shift}, gradient-based diversification consistently delivers the strongest robustness.
Compared with the best-performing baseline, \texttt{Grad} improves accuracy by 3\% on ResNet-50-GN and 4.32\% on ViTBase-LN.
These results confirm that particle-level exploration combined with functional repulsion effectively stabilizes adaptation under compound shifts.

\begin{table}[t]
\caption{Comparisons with baselines on ImageNet-C at severity 5 under a mixture of 15 corruptions. Results report top-1 and top-5 accuracy (\%).}
\label{tab:imagenet-c-mix-shift}
\centering

\begin{minipage}{0.48\linewidth}
\centering
\small
\scalebox{0.6}{
\begin{tabular}{l|cc}
\toprule
\multicolumn{3}{c}{\textbf{ResNet-50-GN}} \\
\midrule
Mix Shifts & top@1 & top@5 \\
\midrule
No Adapt & 30.61 & 53.89 \\
Tent & 29.80$_{\pm0.2}$ & 30.65$_{\pm0.2}$ \\
SAR & 38.12$_{\pm0.1}$ & 59.70$_{\pm0.1}$ \\
\deyoabb & 31.36$_{\pm1.3}$ & 50.78$_{\pm0.1}$ \\
\midrule
\ens & \underline{33.06}$_{\pm0.3}$ & \underline{55.91}$_{\pm0.1}$ \\
\svgd & 32.71$_{\pm0.4}$ & 55.79$_{\pm0.2}$ \\
\kl & 32.66$_{\pm0.3}$ & 52.23$_{\pm0.2}$ \\
\rowcolor{LightCyan}
\grad & \textbf{34.36}$_{\pm0.2}$ & \textbf{56.21}$_{\pm0.1}$ \\
\bottomrule
\end{tabular}}
\end{minipage}
\hfill
\begin{minipage}{0.48\linewidth}
\centering
\small
\scalebox{0.6}{
\begin{tabular}{l|cc}
\toprule
\multicolumn{3}{c}{\textbf{ViTBase-LN}} \\
\midrule
Mix Shifts & top@1 & top@5 \\
\midrule
No Adapt & 29.94 & 54.21 \\
Tent & 32.36$_{\pm0.3}$ & 47.40$_{\pm0.4}$ \\
SAR & 57.78$_{\pm0.1}$ & 78.14$_{\pm0.1}$ \\
\deyoabb & 57.05$_{\pm1.1}$ & 78.35$_{\pm1.3}$ \\
\midrule
\ens & 60.56$_{\pm0.4}$ & 79.23$_{\pm0.4}$ \\
\svgd & \underline{60.84}$_{\pm0.3}$ & 80.30$_{\pm0.2}$ \\
\kl & 60.37$_{\pm0.3}$ & \underline{80.42}$_{\pm0.3}$ \\
\rowcolor{LightCyan}
\grad & \textbf{61.36}$_{\pm0.2}$ & \textbf{81.45}$_{\pm0.2}$ \\
\bottomrule
\end{tabular}}
\end{minipage}

\end{table}

Overall, results across all wild scenarios demonstrate that (i) maintaining multiple adaptation particles already improves stability over single-model entropy minimization, and (ii) structured diversification, particularly gradient-based repulsion, provides additional robustness gains by preventing trajectory collapse under underspecified adaptation.

\begin{table*}[!t]
    \caption{Accuracy (\%) comparison with baseline methods, averaged over five random seeds, on ImageNet-C wild scenarios (severity level 5) using batch size 1 and three particles, each employing a different optimizer.}
    \label{tab:noAug}
    \centering
    \setlength{\tabcolsep}{1.6pt}
    \scalebox{0.48}{
        \begin{tabular}{l|ccc|cccc|cccc|cccc|c}
            \multicolumn{1}{c}{} & \multicolumn{3}{c}{Noise} & \multicolumn{4}{c}{Blur} & \multicolumn{4}{c}{Weather} & \multicolumn{4}{c}{Digital} & \\
            \textbf{Batch Size 1} & \textbf{Gauss.} & \textbf{Shot} & \textbf{Impl.} & \textbf{Defoc.} & \textbf{Glass} & \textbf{Motion} & \textbf{Zoom} & \textbf{Snow} & \textbf{Frost} & \textbf{Fog} & \textbf{Brit.} & \textbf{Contr.} & \textbf{Elastic} & \textbf{Pixel} & \textbf{JPEG} & \textbf{Avg.} \\
            \hline \hline     
            VitBase-LN &9.51 &6.70 &8.21 &28.88 &23.40 &33.91 &27.11 &15.90 &26.48 &47.20 &54.70 &44.11 &30.51 &44.50 &47.81 &29.91 \\
            {Tent}  & 51.15\textsubscript{\scriptsize{$\pm$0.3}} & 51.78\textsubscript{\scriptsize{$\pm$0.2}} & 52.50\textsubscript{\scriptsize{$\pm$0.2}} & 52.18\textsubscript{\scriptsize{$\pm$0.1}} & 47.68\textsubscript{\scriptsize{$\pm$0.2}} & 56.28\textsubscript{\scriptsize{$\pm$0.3}} & 49.06\textsubscript{\scriptsize{$\pm$0.3}} &  8.78\textsubscript{\scriptsize{$\pm$0.4}} & 15.92\textsubscript{\scriptsize{$\pm$0.1}} & 67.25\textsubscript{\scriptsize{$\pm$0.1}} & 73.45\textsubscript{\scriptsize{$\pm$0.2}}& 66.64\textsubscript{\scriptsize{$\pm$0.3}} & 52.09\textsubscript{\scriptsize{$\pm$0.4}} & 64.93\textsubscript{\scriptsize{$\pm$0.2}} & 63.98\textsubscript{\scriptsize{$\pm$0.1}} & 51.58 \\
            {SAR}   & 50.25\textsubscript{\scriptsize{$\pm$0.4}} & 50.65\textsubscript{\scriptsize{$\pm$0.7}} & 51.85\textsubscript{\scriptsize{$\pm$0.3}} & 51.61\textsubscript{\scriptsize{$\pm$0.2}} & 48.92\textsubscript{\scriptsize{$\pm$0.1}} & 56.71\textsubscript{\scriptsize{$\pm$0.1}} & 50.55\textsubscript{\scriptsize{$\pm$0.3}} & 20.45\textsubscript{\scriptsize{$\pm$0.4}} & 54.45\textsubscript{\scriptsize{$\pm$0.6}} & 67.42\textsubscript{\scriptsize{$\pm$0.9}} & 74.92\textsubscript{\scriptsize{$\pm$0.7}} & 65.84\textsubscript{\scriptsize{$\pm$0.0}} & 54.68\textsubscript{\scriptsize{$\pm$0.1}} & 66.57\textsubscript{\scriptsize{$\pm$0.1}} & 64.91\textsubscript{\scriptsize{$\pm$0.1}} & 55.31 \\
            {\deyoabb}  & 52.61\textsubscript{\scriptsize{$\pm$0.7}} & 53.50\textsubscript{\scriptsize{$\pm$1.7}} & 53.46\textsubscript{\scriptsize{$\pm$0.8}} & 54.81\textsubscript{\scriptsize{$\pm$0.1}} & 55.42\textsubscript{\scriptsize{$\pm$0.1}} & 61.92\textsubscript{\scriptsize{$\pm$0.1}} & 38.37\textsubscript{\scriptsize{$\pm$4.8}} & 64.55\textsubscript{\scriptsize{$\pm$0.1}} & 62.93\textsubscript{\scriptsize{$\pm$0.0}} & 70.91\textsubscript{\scriptsize{$\pm$0.1}} & 76.19\textsubscript{\scriptsize{$\pm$0.0}} & 60.14\textsubscript{\scriptsize{$\pm$0.1}} & 65.17\textsubscript{\scriptsize{$\pm$0.1}} & 71.00\textsubscript{\scriptsize{$\pm$0.1}} & 67.56\textsubscript{\scriptsize{$\pm$0.3}} & 60.57 \\
            \hline
            {Naive}   & 52.95\textsubscript{\scriptsize{$\pm$0.5}} & 54.01\textsubscript{\scriptsize{$\pm$0.7}} & 54.23\textsubscript{\scriptsize{$\pm$0.7}}& 55.21\textsubscript{\scriptsize{$\pm$0.1}} & 54.17\textsubscript{\scriptsize{$\pm$0.1}} & 60.43\textsubscript{\scriptsize{$\pm$0.1}} & 55.86\textsubscript{\scriptsize{$\pm$0.9}} & 62.15\textsubscript{\scriptsize{$\pm$0.1}} & 61.23\textsubscript{\scriptsize{$\pm$0.1}} & 71.28\textsubscript{\scriptsize{$\pm$0.1}} & 76.47\textsubscript{\scriptsize{$\pm$0.1}} & 67.21\textsubscript{\scriptsize{$\pm$0.1}} & 64.36\textsubscript{\scriptsize{$\pm$0.0}} & 71.04\textsubscript{\scriptsize{$\pm$0.1}} & 67.63\textsubscript{\scriptsize{$\pm$0.2}} & 61.88 \\
            {\svgd}  & 53.87\textsubscript{\scriptsize{$\pm$0.5}} & 54.65\textsubscript{\scriptsize{$\pm$0.6}} & 54.93\textsubscript{\scriptsize{$\pm$0.5}} & 56.57\textsubscript{\scriptsize{$\pm$0.1}} & 56.24\textsubscript{\scriptsize{$\pm$0.1}} & 62.43\textsubscript{\scriptsize{$\pm$0.1}} & 54.21\textsubscript{\scriptsize{$\pm$0.7}} & 63.72\textsubscript{\scriptsize{$\pm$0.1}} & 62.94\textsubscript{\scriptsize{$\pm$0.1}} & 72.33\textsubscript{\scriptsize{$\pm$0.1}} & 76.82\textsubscript{\scriptsize{$\pm$0.1}} & 68.64\textsubscript{\scriptsize{$\pm$0.1}} & 65.33\textsubscript{\scriptsize{$\pm$0.0}} & 71.26\textsubscript{\scriptsize{$\pm$0.0}} & 68.22\textsubscript{\scriptsize{$\pm$0.2}} & 62.81 \\
            {\kl}    & 53.90\textsubscript{\scriptsize{$\pm$0.5}} & 54.61\textsubscript{\scriptsize{$\pm$0.4}} & 55.15\textsubscript{\scriptsize{$\pm$0.4}} & 56.62\textsubscript{\scriptsize{$\pm$0.1}} & 55.69\textsubscript{\scriptsize{$\pm$0.1}} & 62.13\textsubscript{\scriptsize{$\pm$0.1}} & 57.27\textsubscript{\scriptsize{$\pm$0.4}} & 63.58\textsubscript{\scriptsize{$\pm$0.1}} & 62.45\textsubscript{\scriptsize{$\pm$0.1}} & 72.31\textsubscript{\scriptsize{$\pm$0.1}} & 76.86\textsubscript{\scriptsize{$\pm$0.1}} & 68.49\textsubscript{\scriptsize{$\pm$0.1}} & 64.92\textsubscript{\scriptsize{$\pm$0.1}} & 71.28\textsubscript{\scriptsize{$\pm$0.1}} & 68.43\textsubscript{\scriptsize{$\pm$0.1}} & \underline{62.91} \\
            \rowcolor{LightCyan}
            \grad   & 55.13\textsubscript{\scriptsize{$\pm$0.4}} & 55.28\textsubscript{\scriptsize{$\pm$0.4}} & 56.22\textsubscript{\scriptsize{$\pm$0.2}} & 56.23\textsubscript{\scriptsize{$\pm$0.0}} & 55.84\textsubscript{\scriptsize{$\pm$0.1}} & 62.02\textsubscript{\scriptsize{$\pm$0.0}} & 56.94\textsubscript{\scriptsize{$\pm$0.4}} & 63.73\textsubscript{\scriptsize{$\pm$0.1}} & 62.24\textsubscript{\scriptsize{$\pm$0.0}} & 72.31\textsubscript{\scriptsize{$\pm$0.0}} & 76.90\textsubscript{\scriptsize{$\pm$0.1}} & 68.78\textsubscript{\scriptsize{$\pm$0.0}} & 64.78\textsubscript{\scriptsize{$\pm$0.1}} & 71.37\textsubscript{\scriptsize{$\pm$0.1}} & 68.47\textsubscript{\scriptsize{$\pm$0.0}} & \textbf{63.08} \\
            \hline
        \end{tabular}
    }
\end{table*}

\noindent \textbf{3.2.3 Optimizer Type Diversification:} Beyond explicit regularization, we introduce trajectory-level diversification by assigning different optimization algorithms to different particles. 
While all particles minimize the same entropy objective, the choice of optimizer affects the geometry of parameter updates due to differences in momentum accumulation, adaptive scaling, and weight decay mechanisms. Specifically, for a particle $\theta_i$, the update at iteration $t$ is given by
\begin{equation}
\theta_i^{(t)} = \mathrm{Update}_{\text{opt}_i}
\left(
\theta_i^{(t-1)}, \nabla_{\theta_i} \ell(X; \theta_i^{(t-1)})
\right),
\end{equation}
where $\mathrm{Update}_{\text{opt}_i}$ denotes the update rule defined by the optimizer assigned to particle $i$ (SGD, Adam, or AdamW), including its internal state variables. Using heterogeneous optimizers induces different curvature sensitivities and step dynamics across particles. For example, SGD follows uniform gradient directions, whereas Adam rescales updates according to estimated second-order statistics, and AdamW decouples weight decay from gradient updates. Consequently, even under identical entropy gradients, particles follow distinct trajectories in parameter space. This optimizer heterogeneity acts as an implicit diversification mechanism, reducing the likelihood that all particles converge toward the same adaptation basin. As shown in~\cref{tab:noAug}, combining optimizer diversification with explicit regularization (SVGD, Grad, or KL) further improves robustness under batch size 1 adaptation.

\subsection{Input-Level Diversification}

\begin{table*}[!tbp]

    \caption{Comparison with baseline methods on ImageNet-C wild scenarios (severity level 5) under batch size 1, evaluating the impact of augmentation, averaged over five random seeds, on accuracy (\%).}
    \label{tab:augoptim}
    \centering
    \setlength{\tabcolsep}{1.6pt}
    \scalebox{0.48}{
        \begin{tabular}{l|ccc|cccc|cccc|cccc|c}
            \multicolumn{1}{c}{} & \multicolumn{3}{c}{Noise} & \multicolumn{4}{c}{Blur} & \multicolumn{4}{c}{Weather} & \multicolumn{4}{c}{Digital} & \\
            \textbf{Batch Size 1} & \textbf{Gauss.} & \textbf{Shot} & \textbf{Impl.} & \textbf{Defoc.} & \textbf{Glass} & \textbf{Motion} & \textbf{Zoom} & \textbf{Snow} & \textbf{Frost} & \textbf{Fog} & \textbf{Brit.} & \textbf{Contr.} & \textbf{Elastic} & \textbf{Pixel} & \textbf{JPEG} & \textbf{Avg.} \\
            \hline \hline     
            VitBase-LN &9.51 &6.70 &8.21 &28.88 &23.40 &33.91 &27.11 &15.90 &26.48 &47.20 &54.70 &44.11 &30.51 &44.50 &47.81 &29.91 \\
            {Tent}  & 51.15\textsubscript{\scriptsize{$\pm$0.3}} & 51.78\textsubscript{\scriptsize{$\pm$0.2}} & 52.50\textsubscript{\scriptsize{$\pm$0.2}} & 52.18\textsubscript{\scriptsize{$\pm$0.1}} & 47.68\textsubscript{\scriptsize{$\pm$0.2}} & 56.28\textsubscript{\scriptsize{$\pm$0.3}} & 49.06\textsubscript{\scriptsize{$\pm$0.3}} &  8.78\textsubscript{\scriptsize{$\pm$0.4}} & 15.92\textsubscript{\scriptsize{$\pm$0.1}} & 67.25\textsubscript{\scriptsize{$\pm$0.1}} & 73.45\textsubscript{\scriptsize{$\pm$0.2}}& 66.64\textsubscript{\scriptsize{$\pm$0.3}} & 52.09\textsubscript{\scriptsize{$\pm$0.4}} & 64.93\textsubscript{\scriptsize{$\pm$0.2}} & 63.98\textsubscript{\scriptsize{$\pm$0.1}} & 51.58 \\
            {SAR}   & 50.25\textsubscript{\scriptsize{$\pm$0.4}} & 50.65\textsubscript{\scriptsize{$\pm$0.7}} & 51.85\textsubscript{\scriptsize{$\pm$0.3}} & 51.61\textsubscript{\scriptsize{$\pm$0.2}} & 48.92\textsubscript{\scriptsize{$\pm$0.1}} & 56.71\textsubscript{\scriptsize{$\pm$0.1}} & 50.55\textsubscript{\scriptsize{$\pm$0.3}} & 20.45\textsubscript{\scriptsize{$\pm$0.4}} & 54.45\textsubscript{\scriptsize{$\pm$0.6}} & 67.42\textsubscript{\scriptsize{$\pm$0.9}} & 74.92\textsubscript{\scriptsize{$\pm$0.7}} & 65.84\textsubscript{\scriptsize{$\pm$0.0}} & 54.68\textsubscript{\scriptsize{$\pm$0.1}} & 66.57\textsubscript{\scriptsize{$\pm$0.1}} & 64.91\textsubscript{\scriptsize{$\pm$0.1}} & 55.31 \\
            {\deyoabb}  & 52.61\textsubscript{\scriptsize{$\pm$0.7}} & 53.50\textsubscript{\scriptsize{$\pm$1.7}} & 53.46\textsubscript{\scriptsize{$\pm$0.8}} & 54.81\textsubscript{\scriptsize{$\pm$0.1}} & 55.42\textsubscript{\scriptsize{$\pm$0.1}} & 61.92\textsubscript{\scriptsize{$\pm$0.1}} & 38.37\textsubscript{\scriptsize{$\pm$4.8}} & 64.55\textsubscript{\scriptsize{$\pm$0.1}} & 62.93\textsubscript{\scriptsize{$\pm$0.0}} & 70.91\textsubscript{\scriptsize{$\pm$0.1}} & 76.19\textsubscript{\scriptsize{$\pm$0.0}} & 60.14\textsubscript{\scriptsize{$\pm$0.1}} & 65.17\textsubscript{\scriptsize{$\pm$0.1}} & 71.00\textsubscript{\scriptsize{$\pm$0.1}} & 67.56\textsubscript{\scriptsize{$\pm$0.3}} & 60.57 \\
            \hline
            {Naive+Aug}   & 54.21\textsubscript{\scriptsize{$\pm$0.3}} & 55.23\textsubscript{\scriptsize{$\pm$0.4}} & 55.13\textsubscript{\scriptsize{$\pm$0.4}} & 55.38\textsubscript{\scriptsize{$\pm$0.1}}& 55.61\textsubscript{\scriptsize{$\pm$0.1}} & 62.58\textsubscript{\scriptsize{$\pm$0.1}} & 43.11\textsubscript{\scriptsize{$\pm$0.8}} & 65.49\textsubscript{\scriptsize{$\pm$0.1}} & 64.61\textsubscript{\scriptsize{$\pm$0.1}} & 71.67\textsubscript{\scriptsize{$\pm$0.1}} & 76.38\textsubscript{\scriptsize{$\pm$0.1}} & 68.41\textsubscript{\scriptsize{$\pm$0.1}} & 66.39\textsubscript{\scriptsize{$\pm$0.1}} & 71.29\textsubscript{\scriptsize{$\pm$0.1}} & 68.37\textsubscript{\scriptsize{$\pm$0.2}} & 62.26 \\
            {\svgd+Aug}  & 54.72\textsubscript{\scriptsize{$\pm$0.3}} & 56.14\textsubscript{\scriptsize{$\pm$0.3}} & 56.17\textsubscript{\scriptsize{$\pm$0.2}} & 57.10\textsubscript{\scriptsize{$\pm$0.1}} & 57.52\textsubscript{\scriptsize{$\pm$0.1}} & 63.50\textsubscript{\scriptsize{$\pm$0.1}} & 43.13\textsubscript{\scriptsize{$\pm$0.5}} & 66.33\textsubscript{\scriptsize{$\pm$0.1}} & 65.14\textsubscript{\scriptsize{$\pm$0.0}} & 73.36\textsubscript{\scriptsize{$\pm$0.1}} & 77.61\textsubscript{\scriptsize{$\pm$0.1}} & 69.26\textsubscript{\scriptsize{$\pm$0.1}} & 67.61\textsubscript{\scriptsize{$\pm$0.1}} & 72.64\textsubscript{\scriptsize{$\pm$0.0}} & 69.24\textsubscript{\scriptsize{$\pm$0.1}} & \underline{63.30} \\
            {\kl+Aug}    & 54.54\textsubscript{\scriptsize{$\pm$0.3}} & 55.63\textsubscript{\scriptsize{$\pm$0.4}} & 55.98\textsubscript{\scriptsize{$\pm$0.3}} & 56.96\textsubscript{\scriptsize{$\pm$0.1}} & 57.25\textsubscript{\scriptsize{$\pm$0.1}} & 63.50\textsubscript{\scriptsize{$\pm$0.2}} & 41.01\textsubscript{\scriptsize{$\pm$0.5}} & 66.20\textsubscript{\scriptsize{$\pm$0.1}} & 64.96\textsubscript{\scriptsize{$\pm$0.1}} & 73.46\textsubscript{\scriptsize{$\pm$0.1}} & 77.52\textsubscript{\scriptsize{$\pm$0.1}} & 69.25\textsubscript{\scriptsize{$\pm$0.1}} & 68.22\textsubscript{\scriptsize{$\pm$0.1}} & 72.53\textsubscript{\scriptsize{$\pm$0.1}} & 69.53\textsubscript{\scriptsize{$\pm$0.1}} & 63.10 \\
            \rowcolor{LightCyan}
            \grad+Aug   & 56.07\textsubscript{\scriptsize{$\pm$0.2}} & 56.99\textsubscript{\scriptsize{$\pm$0.3}} & 57.32\textsubscript{\scriptsize{$\pm$0.2}} & 57.35\textsubscript{\scriptsize{$\pm$0.0}} & 57.89\textsubscript{\scriptsize{$\pm$0.1}} & 63.85\textsubscript{\scriptsize{$\pm$0.1}} & 60.24\textsubscript{\scriptsize{$\pm$0.2}} & 66.28\textsubscript{\scriptsize{$\pm$0.0}} & 64.97\textsubscript{\scriptsize{$\pm$0.1}} & 73.30\textsubscript{\scriptsize{$\pm$0.1}} & 77.56\textsubscript{\scriptsize{$\pm$0.1}} & 69.29\textsubscript{\scriptsize{$\pm$0.1}} & 67.63\textsubscript{\scriptsize{$\pm$0.0}} & 72.68\textsubscript{\scriptsize{$\pm$0.0}} & 69.39\textsubscript{\scriptsize{$\pm$0.1}} & \textbf{64.72} \\
            \hline
        \end{tabular}
    }

\end{table*}

\begin{table*}[htb]
\caption{Comparison of baselines and WaTT with our diversified variant of WaTT on CIFAR-100-C in terms of accuracy (\%).}
\label{tab:watt}
\centering
\scriptsize
\setlength{\tabcolsep}{2.5pt}
\renewcommand{\arraystretch}{1.2}
\scalebox{0.62}{
\begin{tabular}{l|ccccccccccc}
\toprule
\texttt{CIFAR-100-C} & CLIP & TENT & TPT & TDA & DiffTPT & SAR & CLIPArTT & WATT-P & WATT-S & WATT-S+Aug & WATT-S+Aug+Reg. \\
\midrule
Gaussian Noise & 14.80 & 14.38\textsubscript{$\pm$0.14} & 14.03\textsubscript{$\pm$0.10} & 8.20\textsubscript{$\pm$0.35} & 21.40\textsubscript{$\pm$0.08} & 15.85\textsubscript{$\pm$0.06} & 25.32\textsubscript{$\pm$0.14} & 31.28\textsubscript{$\pm$0.03} & 32.07\textsubscript{$\pm$0.23} & 34.07\textsubscript{$\pm$0.03} & 34.91\textsubscript{$\pm$0.33} \\
Shot noise & 16.03 & 17.34\textsubscript{$\pm$0.27} & 15.25\textsubscript{$\pm$0.17} & 9.58\textsubscript{$\pm$0.43} & 24.17\textsubscript{$\pm$0.49} & 17.41\textsubscript{$\pm$0.05} & 27.90\textsubscript{$\pm$0.05} & 33.44\textsubscript{$\pm$0.11} & 34.36\textsubscript{$\pm$0.11} & 35.87\textsubscript{$\pm$0.13} & 36.79\textsubscript{$\pm$0.26} \\
Impulse Noise & 13.85 & 10.03\textsubscript{$\pm$0.13} & 13.01\textsubscript{$\pm$0.13} & 7.63\textsubscript{$\pm$0.19} & 16.87\textsubscript{$\pm$0.24} & 14.90\textsubscript{$\pm$0.09} & 25.62\textsubscript{$\pm$0.09} & 29.40\textsubscript{$\pm$0.11} & 30.33\textsubscript{$\pm$0.03} & 32.25\textsubscript{$\pm$0.05} & 33.81\textsubscript{$\pm$0.19} \\
Defocus blur & 36.74 & 49.05\textsubscript{$\pm$0.07} & 37.07\textsubscript{$\pm$0.17} & 25.59\textsubscript{$\pm$0.41} & 20.30\textsubscript{$\pm$0.30} & 42.00\textsubscript{$\pm$0.04} & 49.88\textsubscript{$\pm$0.23} & 52.32\textsubscript{$\pm$0.28} & 52.99\textsubscript{$\pm$0.16} & 53.71\textsubscript{$\pm$0.23} & 54.16\textsubscript{$\pm$0.28} \\
Glass blur & 14.19 & 3.71\textsubscript{$\pm$0.07} & 16.41\textsubscript{$\pm$0.02} & 9.83\textsubscript{$\pm$0.05} & 15.57\textsubscript{$\pm$0.46} & 15.07\textsubscript{$\pm$0.02} & 27.89\textsubscript{$\pm$0.03} & 31.20\textsubscript{$\pm$0.12} & 32.15\textsubscript{$\pm$0.30} & 35.12\textsubscript{$\pm$0.30} & 35.61\textsubscript{$\pm$0.33} \\
Motion blur & 36.14 & 46.62\textsubscript{$\pm$0.27} & 37.52\textsubscript{$\pm$0.23} & 28.92\textsubscript{$\pm$0.18} & 21.10\textsubscript{$\pm$0.64} & 39.52\textsubscript{$\pm$0.06} & 47.93\textsubscript{$\pm$0.14} & 49.72\textsubscript{$\pm$0.15} & 50.53\textsubscript{$\pm$0.12} & 51.87\textsubscript{$\pm$0.20} & 51.98\textsubscript{$\pm$0.23} \\
Zoom blur & 40.24 & 51.84\textsubscript{$\pm$0.15} & 42.09\textsubscript{$\pm$0.11} & 31.08\textsubscript{$\pm$0.36} & 25.53\textsubscript{$\pm$0.36} & 45.40\textsubscript{$\pm$0.06} & 52.70\textsubscript{$\pm$0.06} & 54.72\textsubscript{$\pm$0.04} & 55.30\textsubscript{$\pm$0.22} & 56.29\textsubscript{$\pm$0.05} & 56.27\textsubscript{$\pm$0.11} \\
Snow & 38.95 & 46.71\textsubscript{$\pm$0.21} & 43.25\textsubscript{$\pm$0.32} & 32.94\textsubscript{$\pm$0.12} & 28.83\textsubscript{$\pm$0.37} & 43.56\textsubscript{$\pm$0.04} & 49.72\textsubscript{$\pm$0.17} & 51.79\textsubscript{$\pm$0.04} & 52.77\textsubscript{$\pm$0.15} & 54.09\textsubscript{$\pm$0.12} & 55.21\textsubscript{$\pm$0.17} \\
Frost & 40.56 & 44.90\textsubscript{$\pm$0.27} & 43.31\textsubscript{$\pm$0.31} & 34.84\textsubscript{$\pm$0.25} & 31.60\textsubscript{$\pm$0.32} & 41.70\textsubscript{$\pm$0.06} & 49.63\textsubscript{$\pm$0.10} & 53.04\textsubscript{$\pm$0.24} & 53.79\textsubscript{$\pm$0.21} & 54.63\textsubscript{$\pm$0.13} & 54.78\textsubscript{$\pm$0.12} \\
Fog & 38.00 & 47.31\textsubscript{$\pm$0.18} & 30.81\textsubscript{$\pm$0.31} & 31.13\textsubscript{$\pm$0.16} & 16.60\textsubscript{$\pm$0.43} & 40.36\textsubscript{$\pm$0.14} & 48.77\textsubscript{$\pm$0.04} & 50.78\textsubscript{$\pm$0.21} & 51.49\textsubscript{$\pm$0.21} & 53.32\textsubscript{$\pm$0.12} & 53.91\textsubscript{$\pm$0.16} \\
Brightness & 48.18 & 60.58\textsubscript{$\pm$0.18} & 50.23\textsubscript{$\pm$0.11} & 42.36\textsubscript{$\pm$0.36} & 31.20\textsubscript{$\pm$0.16} & 52.77\textsubscript{$\pm$0.14} & 61.27\textsubscript{$\pm$0.08} & 62.65\textsubscript{$\pm$0.25} & 63.57\textsubscript{$\pm$0.21} & 64.34\textsubscript{$\pm$0.34} & 65.13\textsubscript{$\pm$0.37} \\
Contrast & 29.53 & 45.90\textsubscript{$\pm$0.18} & 28.09\textsubscript{$\pm$0.13} & 19.03\textsubscript{$\pm$0.32} & 7.70\textsubscript{$\pm$0.22} & 28.41\textsubscript{$\pm$0.11} & 48.55\textsubscript{$\pm$0.24} & 51.34\textsubscript{$\pm$0.14} & 52.76\textsubscript{$\pm$0.27} & 54.99\textsubscript{$\pm$0.20} & 55.07\textsubscript{$\pm$0.26} \\
Elastic & 26.33 & 33.09\textsubscript{$\pm$0.08} & 28.12\textsubscript{$\pm$0.17} & 18.88\textsubscript{$\pm$0.24} & 21.60\textsubscript{$\pm$0.54} & 26.77\textsubscript{$\pm$0.14} & 37.45\textsubscript{$\pm$0.30} & 39.97\textsubscript{$\pm$0.30} & 40.90\textsubscript{$\pm$0.43} & 43.02\textsubscript{$\pm$0.22} & 43.01\textsubscript{$\pm$0.27} \\
Pixelate & 21.98 & 26.47\textsubscript{$\pm$0.06} & 20.43\textsubscript{$\pm$0.14} & 14.59\textsubscript{$\pm$0.22} & 22.83\textsubscript{$\pm$0.31} & 23.88\textsubscript{$\pm$0.09} & 33.88\textsubscript{$\pm$0.16} & 39.59\textsubscript{$\pm$0.09} & 40.97\textsubscript{$\pm$0.16} & 44.29\textsubscript{$\pm$0.14} & 44.81\textsubscript{$\pm$0.17} \\  
JPEG comp. & 25.91 & 29.89\textsubscript{$\pm$0.07} & 28.82\textsubscript{$\pm$0.09} & 17.56\textsubscript{$\pm$0.11} & 31.77\textsubscript{$\pm$0.45} & 27.20\textsubscript{$\pm$0.04} & 36.07\textsubscript{$\pm$0.32} & 38.99\textsubscript{$\pm$0.16} & 39.59\textsubscript{$\pm$0.08} & 41.67\textsubscript{$\pm$0.19} & 42.48\textsubscript{$\pm$0.24} \\

\midrule
Mean & 29.43 & 35.19 & 30.46 & 22.08 & 22.89 & 31.92 & 41.51 & 44.68 & 45.57 & 47.30 & 47.86 \\
\bottomrule
\end{tabular}
}
\end{table*}

We further introduce diversification at the input level by incorporating structured data augmentations during adaptation. 
Unlike prior work that employs augmentations primarily for filtering or consistency regularization~\cite{deyo2024}, we use them as optimization-time perturbations that alter the local geometry of the entropy landscape experienced by each particle. Given an input $\mathbf{x}$, we construct a set of transformed views
\[
\mathcal{A}(\mathbf{x}) = \{\mathbf{x}, \mathbf{x}^{(h)}, \mathbf{x}^{(v)}\},
\]
where $\mathbf{x}^{(h)}$ and $\mathbf{x}^{(v)}$ denote horizontal and vertical flips. 
For each particle $\theta_i$, adaptation is performed over all views:
\begin{equation}
\ell_{\text{aug}}(X; \theta_i)
=
\sum_{\mathbf{x}' \in \mathcal{A}(\mathbf{x})}
\ell(\mathbf{x}'; \theta_i).
\end{equation}

\noindent The augmented entropy loss replaces $\ell(X;\theta_i)$ in the population objective (Eq.~\ref{eq:pop_objective}), while diversification regularizers remain unchanged.

Input diversification influences adaptation in two complementary ways.
First, it stabilizes entropy minimization by preventing overfitting to a single input configuration.
Second, because augmentations perturb gradients differently for each particle, the repulsive regularizers (like SVGD, Grad, etc.) amplify trajectory separation in parameter space. Consequently, particles explore distinct local minima corresponding to different augmented perspectives of the target stream. As reported in~\cref{tab:augoptim}, combining input diversification with gradient-based repulsion yields the strongest robustness under batch-size-one adaptation. Across corruption types, the proposed configuration consistently outperforms prior TTA methods and improves over DeYO by up to 4\%. Furthermore, compared to WaTT~\cite{osowiechi2024watt}, which adapts only the text encoder of CLIP, our framework operates directly on the image encoder and achieves an additional 2\% average accuracy gain, demonstrating the complementary benefit of particle-level diversification.
It should be noted that our framework maintains $K$ adaptation particles whose predictions are
aggregated at inference time. As a result, the computational cost scales approximately linearly with $K$. In our experiments, we use $K=3$, resulting in roughly a $3\times$ inference overhead compared to single-model TTA methods.
However, since only the normalization layers are updated while the remaining network parameters remain frozen, the additional optimization cost remains lightweight. A more detailed discussion is provided in the Appendix~\ref{append:runtime} and Appendix~\ref{append:limitation}.

\section{Ablation Study}
\label{sec:ablation}








\begin{figure}[t]
\centering

\begin{subfigure}{0.48\linewidth}
    \centering
    \includegraphics[width=\linewidth]{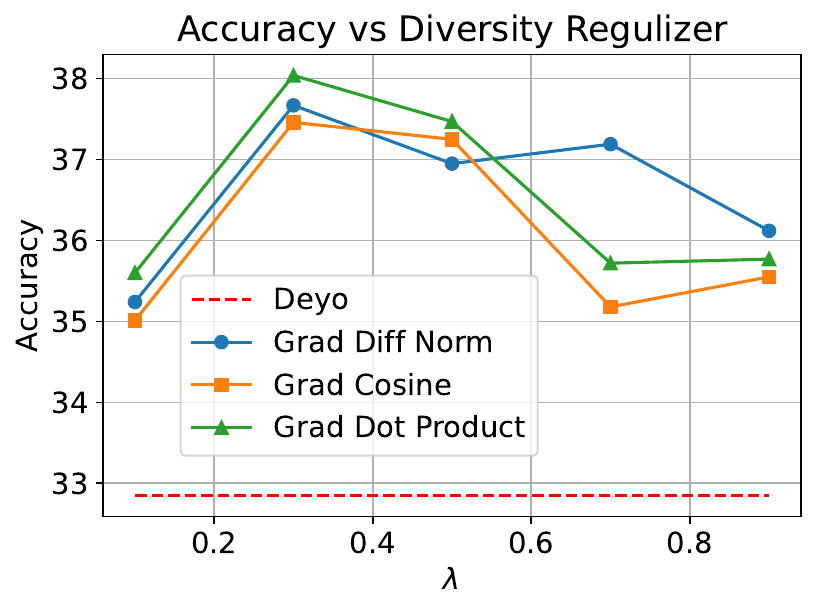}
     \caption{Comparison of gradient-based diversity variants on ImageNet-C with zoom blur corruption (severity 5).}
    \label{fig:lambda}
\end{subfigure}
\hfill
\begin{subfigure}{0.48\linewidth}
    \centering
    \includegraphics[width=\linewidth, height=4.4cm]{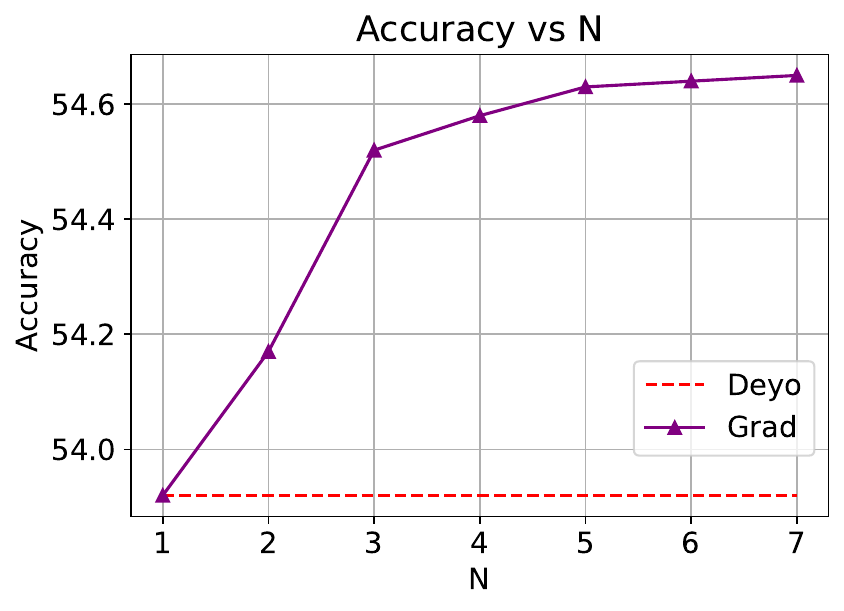}
     \caption{Impact of the particle collection size $N$ on ImageNet-C with Gaussian corruption (severity 5).}
    \label{fig:particles}
\end{subfigure}

\caption{Ablation studies of diversification design choices on ImageNet-C under the wild test-time adaptation scenario (batch size $1$) using a ViTBase-LN backbone.}
\label{fig:diversification_analysis}

\end{figure}

\noindent\textbf{Effect of Gradient Diversity Formulation.}
We evaluate three formulations of gradient-based diversification: 
(i) pairwise dot-product alignment (Eq.~\ref{eq:grad_div}), 
(ii) $\ell_2$ difference between gradients, and 
(iii) cosine similarity.~\Cref{fig:lambda} reports results on ImageNet-C (Zoom Blur, severity level 5) across grid-selected values of the regularization coefficient $\lambda$.
Among the three variants, the dot-product formulation achieves the highest accuracy across most values of $\lambda$. 
Unlike cosine similarity, which normalizes magnitude information, and difference norms, which penalize scale rather than directional alignment, the dot-product directly discourages shared descent directions. 
This suggests that reducing gradient alignment, rather than merely increasing gradient magnitude disparity, is more effective for preventing trajectory collapse under entropy minimization.

Performance peaks around $\lambda=0.3$, indicating a balanced trade-off between entropy minimization and functional repulsion. 
Very large $\lambda$ values degrade accuracy, confirming that excessive repulsion can hinder adaptation.

\noindent\textbf{Sensitivity to Collection Size $N$.}
We analyze the effect of the number of particles on Gaussian noise corruption (severity level 5).
As shown in~\cref{fig:particles}, accuracy improves substantially when increasing from $N=1$ (single-model adaptation) to $N=3$, demonstrating the benefit of multi-trajectory exploration. 
Beyond $N=3$, gains become marginal relative to computational overhead. 
This saturation behavior indicates that a small number of interacting particles is sufficient to capture multiple plausible low-entropy modes. 
Accordingly, we adopt $N=3$ in all experiments to balance robustness and efficiency.

\begin{table}[!tbp]
    \caption{Comparisons with baseline methods on ImageNet-C wild scenario at severity level 5 under batch size 1 across part of corruption types regarding mean of accuracy(\%).}
    \label{tab:tent-grad}
    \centering
    \scalebox{0.64}{
        \begin{tabular}{l|ccc|cccc|c}
            \multicolumn{1}{c}{} & \multicolumn{3}{c}{Noise} & \multicolumn{4}{c}{Blur} \\
            \textbf{Batch Size 1} & \textbf{Gauss.} & \textbf{Shot} & \textbf{Impl.} & \textbf{Defoc.} & \textbf{Glass} & \textbf{Motion} & \textbf{Zoom}& \textbf{Avg.} \\
            \hline \hline
            VitBase-LN & 9.51&6.72 &8.21 &29.01 &23.41 &33.87 & 27.11& 19.69 \\
            {Tent}  & 51.15 & 51.78 & 52.50 & 52.18 & 47.68 & 56.28 & 49.06 & 51.52 \\
            {SAR}   & 50.25 & 50.65 & 51.85 & 51.61 & 48.92 & 56.71 & 50.55 & 51.51 \\
            {\deyoabb}  & 52.61 & 53.50 & 53.46 & 54.81 & 55.42 & 61.92 & 38.37 & \underline{52.87} \\
            {Tent + \grad} & 52.23 & 52.97 & 53.45 & 54.38 & 51.40 & 58.49 & 52.54& \textbf{53.63} \\
            \hline
        \end{tabular}
            }

\end{table}

\noindent\textbf{Plug-in Evaluation on Tent.}
To verify that gradient diversification is not specific to our ensemble initialization, we integrate the proposed gradient repulsion term into Tent~\cite{wang2021tent}.~\Cref{tab:tent-grad} shows that Tent-\texttt{Grad} consistently improves over vanilla Tent and achieves competitive performance with DeYO.
This demonstrates that gradient-based functional diversification acts as a general stabilization mechanism for entropy-based TTA, independent of the underlying baseline.
\section{Conclusion}
\label{sec:concl}

We revisited entropy-based test-time adaptation through the lens of underspecification and argued that conventional single-model adaptation implicitly performs point-estimate (MAP-like) inference under highly underconstrained objectives. Such point estimates are unstable under distribution shift, often leading to trajectory collapse and reliance on spurious modes. To address this limitation, we proposed a population-based diversification framework that maintains multiple interacting adaptation particles. By introducing structured diversification at the parameter, functional (gradient), optimizer, and input levels, our method transforms TTA from brittle single-trajectory optimization into controlled multi-hypothesis exploration. Extensive experiments across ImageNet-C and related benchmarks demonstrate consistent improvements under challenging settings, including batch-size-one adaptation, label distribution shifts, and mixed corruptions. 
Ablation studies further confirm that gradient-based functional repulsion is particularly effective in preventing trajectory alignment under entropy minimization. Overall, our results suggest that treating test-time adaptation as a multi-hypothesis inference problem is essential for reliable deployment under distribution shift. We hope this perspective motivates future work toward principled population-based adaptation strategies that improve robustness under underspecification.
\bibliographystyle{splncs04}
\bibliography{__main}

@String(CVPR  = {IEEE Conf. Comput. Vis. Pattern Recog.})

@String(NeurIPS = {Adv. Neural Inform. Process. Syst.})

@String(ICML  = {Int. Conf. Mach. Learn.})

@String(ICLR  = {Int. Conf. Learn. Represent.})

@String(IJCAI = {IJCAI})

@String(JMLR  = {J. Mach. Learn. Res.})

@String(CVPR  = {CVPR})

@String(NeurIPS = {NeurIPS})

@String(ICML  = {ICML})

@String(ICLR  = {ICLR})

@String(JMLR  = {JMLR})

@String(CVPR= {IEEE Conf. Comput. Vis. Pattern Recog.})

@String(ICLR = {Int. Conf. Learn. Represent.})

@article{nado-2020,
  author       = {Zachary Nado and
                  Shreyas Padhy and
                  D. Sculley and
                  Alexander D'Amour and
                  Balaji Lakshminarayanan and
                  Jasper Snoek},
  title        = {Evaluating Prediction-Time Batch Normalization for Robustness under
                  Covariate Shift},
  journal      = {CoRR},
  volume       = {abs/2006.10963},
  year         = {2020},
  url          = {https://arxiv.org/abs/2006.10963},
  eprinttype    = {arXiv},
  eprint       = {2006.10963},
  bibsource    = {dblp computer science bibliography, https://dblp.org}
}

@inproceedings{wang2021tent,
  title={Tent: Fully Test-Time Adaptation by Entropy Minimization},
  author={Dequan Wang and Evan Shelhamer and Shaoteng Liu and Bruno Olshausen and Trevor Darrell},
  booktitle={International Conference on Learning Representations},
  year={2021},
  url={https://openreview.net/forum?id=uXl3bZLkr3c}
}

@inproceedings{niu2023towards,
  title={Towards Stable Test-Time Adaptation in Dynamic Wild World},
  author={Niu, Shuaicheng and Wu, Jiaxiang and Zhang, Yifan and Wen, Zhiquan and Chen, Yaofo and Zhao, Peilin and Tan, Mingkui},
  booktitle = {Internetional Conference on Learning Representations},
  year = {2023}
}

@inproceedings{deyo2024,
  title={Entropy is not Enough for Test-Time Adaptation: From the Perspective of Disentangled Factors},
  author={Jonghyun Lee and Dahuin Jung and Saehyung Lee and Junsung Park and Juhyeon Shin and Uiwon Hwang and Sungroh Yoon},
  booktitle={The Twelfth International Conference on Learning Representations},
  year={2024},
  url={https://openreview.net/forum?id=9w3iw8wDuE}
}

@inproceedings{pres2024entropy,
  title={The Entropy Enigma: Success and Failure of Entropy Minimization},
  author={Press, Ori and Shwartz-Ziv, Ravid and LeCun, Yann and Bethge, Matthias},
  booktitle={International Conference on Machine Learning},
  year={2024},
  organization={PMLR}
}

@article{jian2023survey,
  author       = {Jian Liang and
                  Ran He and
                  Tieniu Tan},
  title        = {A Comprehensive Survey on Test-Time Adaptation under Distribution
                  Shifts},
  journal      = {CoRR},
  volume       = {abs/2303.15361},
  year         = {2023},
  url          = {https://doi.org/10.48550/arXiv.2303.15361},
  doi          = {10.48550/ARXIV.2303.15361},
  eprinttype    = {arXiv},
  eprint       = {2303.15361},
  bibsource    = {dblp computer science bibliography, https://dblp.org}
}

@article{shannon,
  author = {Shannon, Claude Elwood},
  journal = {The Bell System Technical Journal},
  pages = {379--423},
  title = {A Mathematical Theory of Communication},
  url = {http://plan9.bell-labs.com/cm/ms/what/shannonday/shannon1948.pdf},
  urldate = {2003-04-22},
  volume = 27,
  year = 1948
}

@inproceedings{clip2021,
  author       = {Alec Radford and
                  Jong Wook Kim and
                  Chris Hallacy and
                  Aditya Ramesh and
                  Gabriel Goh and
                  Sandhini Agarwal and
                  Girish Sastry and
                  Amanda Askell and
                  Pamela Mishkin and
                  Jack Clark and
                  Gretchen Krueger and
                  Ilya Sutskever},
  editor       = {Marina Meila and
                  Tong Zhang},
  title        = {Learning Transferable Visual Models From Natural Language Supervision},
  booktitle    = {Proceedings of the 38th International Conference on Machine Learning,
                  {ICML} 2021, 18-24 July 2021, Virtual Event},
  series       = {Proceedings of Machine Learning Research},
  volume       = {139},
  pages        = {8748--8763},
  publisher    = {{PMLR}},
  year         = {2021},
  url          = {http://proceedings.mlr.press/v139/radford21a.html},
  bibsource    = {dblp computer science bibliography, https://dblp.org}
}

@inproceedings{vit2021,
  author       = {Alexey Dosovitskiy and
                  Lucas Beyer and
                  Alexander Kolesnikov and
                  Dirk Weissenborn and
                  Xiaohua Zhai and
                  Thomas Unterthiner and
                  Mostafa Dehghani and
                  Matthias Minderer and
                  Georg Heigold and
                  Sylvain Gelly and
                  Jakob Uszkoreit and
                  Neil Houlsby},
  title        = {An Image is Worth 16x16 Words: Transformers for Image Recognition
                  at Scale},
  booktitle    = {9th International Conference on Learning Representations, {ICLR} 2021,
                  Virtual Event, Austria, May 3-7, 2021},
  publisher    = {OpenReview.net},
  year         = {2021},
  url          = {https://openreview.net/forum?id=YicbFdNTTy},
  bibsource    = {dblp computer science bibliography, https://dblp.org}
}

@inproceedings{imagenetc2019,
  author       = {Dan Hendrycks and
                  Thomas G. Dietterich},
  title        = {Benchmarking Neural Network Robustness to Common Corruptions and Perturbations},
  booktitle    = {7th International Conference on Learning Representations, {ICLR} 2019,
                  New Orleans, LA, USA, May 6-9, 2019},
  publisher    = {OpenReview.net},
  year         = {2019},
  url          = {https://openreview.net/forum?id=HJz6tiCqYm},
  bibsource    = {dblp computer science bibliography, https://dblp.org}
}

@inproceedings{uda2015,
  author       = {Yaroslav Ganin and
                  Victor S. Lempitsky},
  editor       = {Francis R. Bach and
                  David M. Blei},
  title        = {Unsupervised Domain Adaptation by Backpropagation},
  booktitle    = {Proceedings of the 32nd International Conference on Machine Learning,
                  {ICML} 2015, Lille, France, 6-11 July 2015},
  series       = {{JMLR} Workshop and Conference Proceedings},
  volume       = {37},
  pages        = {1180--1189},
  publisher    = {JMLR.org},
  year         = {2015},
  url          = {http://proceedings.mlr.press/v37/ganin15.html},
  bibsource    = {dblp computer science bibliography, https://dblp.org}
}

@inproceedings{uda2018,
  author       = {Kuniaki Saito and
                  Kohei Watanabe and
                  Yoshitaka Ushiku and
                  Tatsuya Harada},
  title        = {Maximum Classifier Discrepancy for Unsupervised Domain Adaptation},
  booktitle    = {2018 {IEEE} Conference on Computer Vision and Pattern Recognition,
                  {CVPR} 2018, Salt Lake City, UT, USA, June 18-22, 2018},
  pages        = {3723--3732},
  publisher    = {Computer Vision Foundation / {IEEE} Computer Society},
  year         = {2018},
  url          = {http://openaccess.thecvf.com/content\_cvpr\_2018/html/Saito\_Maximum\_Classifier\_Discrepancy\_CVPR\_2018\_paper.html},
  doi          = {10.1109/CVPR.2018.00392},
  bibsource    = {dblp computer science bibliography, https://dblp.org}
}

@inproceedings{uda2021,
  author       = {Zhen Qiu and
                  Yifan Zhang and
                  Hongbin Lin and
                  Shuaicheng Niu and
                  Yanxia Liu and
                  Qing Du and
                  Mingkui Tan},
  editor       = {Zhi{-}Hua Zhou},
  title        = {Source-free Domain Adaptation via Avatar Prototype Generation and
                  Adaptation},
  booktitle    = {Proceedings of the Thirtieth International Joint Conference on Artificial
                  Intelligence, {IJCAI} 2021, Virtual Event / Montreal, Canada, 19-27
                  August 2021},
  pages        = {2921--2927},
  publisher    = {ijcai.org},
  year         = {2021},
  url          = {https://doi.org/10.24963/ijcai.2021/402},
  doi          = {10.24963/IJCAI.2021/402},
  bibsource    = {dblp computer science bibliography, https://dblp.org}
}

@article{svgd2016,
  title={Stein variational gradient descent: A general purpose bayesian inference algorithm},
  author={Liu, Qiang and Wang, Dilin},
  journal={Advances in neural information processing systems},
  volume={29},
  year={2016}
}

@inproceedings{evading2022,
  author       = {Damien Teney and
                  Ehsan Abbasnejad and
                  Simon Lucey and
                  Anton van den Hengel},
  title        = {Evading the Simplicity Bias: Training a Diverse Set of Models Discovers
                  Solutions with Superior {OOD} Generalization},
  booktitle    = {{IEEE/CVF} Conference on Computer Vision and Pattern Recognition,
                  {CVPR} 2022, New Orleans, LA, USA, June 18-24, 2022},
  pages        = {16740--16751},
  publisher    = {{IEEE}},
  year         = {2022},
  url          = {https://doi.org/10.1109/CVPR52688.2022.01626},
  doi          = {10.1109/CVPR52688.2022.01626},
  bibsource    = {dblp computer science bibliography, https://dblp.org}
}

@article{domain2016,
  author       = {Yaroslav Ganin and
                  Evgeniya Ustinova and
                  Hana Ajakan and
                  Pascal Germain and
                  Hugo Larochelle and
                  Fran{\c{c}}ois Laviolette and
                  Mario Marchand and
                  Victor S. Lempitsky},
  title        = {Domain-Adversarial Training of Neural Networks},
  journal      = {J. Mach. Learn. Res.},
  volume       = {17},
  pages        = {59:1--59:35},
  year         = {2016},
  url          = {https://jmlr.org/papers/v17/15-239.html},
  bibsource    = {dblp computer science bibliography, https://dblp.org}
}

@inproceedings{domain2013,
  author       = {Krikamol Muandet and
                  David Balduzzi and
                  Bernhard Sch{\"{o}}lkopf},
  title        = {Domain Generalization via Invariant Feature Representation},
  booktitle    = {Proceedings of the 30th International Conference on Machine Learning,
                  {ICML} 2013, Atlanta, GA, USA, 16-21 June 2013},
  series       = {{JMLR} Workshop and Conference Proceedings},
  volume       = {28},
  pages        = {10--18},
  publisher    = {JMLR.org},
  year         = {2013},
  url          = {http://proceedings.mlr.press/v28/muandet13.html},
  bibsource    = {dblp computer science bibliography, https://dblp.org}
}

@inproceedings{pitfalls2020,
  author       = {Harshay Shah and
                  Kaustav Tamuly and
                  Aditi Raghunathan and
                  Prateek Jain and
                  Praneeth Netrapalli},
  editor       = {Hugo Larochelle and
                  Marc'Aurelio Ranzato and
                  Raia Hadsell and
                  Maria{-}Florina Balcan and
                  Hsuan{-}Tien Lin},
  title        = {The Pitfalls of Simplicity Bias in Neural Networks},
  booktitle    = {Advances in Neural Information Processing Systems 33: Annual Conference
                  on Neural Information Processing Systems 2020, NeurIPS 2020, December
                  6-12, 2020, virtual},
  year         = {2020},
  url          = {https://proceedings.neurips.cc/paper/2020/hash/6cfe0e6127fa25df2a0ef2ae1067d915-Abstract.html},
  bibsource    = {dblp computer science bibliography, https://dblp.org}
}

@inproceedings{simplicity2017,
  author       = {Devansh Arpit and
                  Stanislaw Jastrzebski and
                  Nicolas Ballas and
                  David Krueger and
                  Emmanuel Bengio and
                  Maxinder S. Kanwal and
                  Tegan Maharaj and
                  Asja Fischer and
                  Aaron C. Courville and
                  Yoshua Bengio and
                  Simon Lacoste{-}Julien},
  editor       = {Doina Precup and
                  Yee Whye Teh},
  title        = {A Closer Look at Memorization in Deep Networks},
  booktitle    = {Proceedings of the 34th International Conference on Machine Learning,
                  {ICML} 2017, Sydney, NSW, Australia, 6-11 August 2017},
  series       = {Proceedings of Machine Learning Research},
  volume       = {70},
  pages        = {233--242},
  publisher    = {{PMLR}},
  year         = {2017},
  url          = {http://proceedings.mlr.press/v70/arpit17a.html},
  bibsource    = {dblp computer science bibliography, https://dblp.org}
}

@inproceedings{simplicity2019,
  author       = {Vaishnavh Nagarajan and
                  J. Zico Kolter},
  editor       = {Hanna M. Wallach and
                  Hugo Larochelle and
                  Alina Beygelzimer and
                  Florence d'Alch{\'{e}}{-}Buc and
                  Emily B. Fox and
                  Roman Garnett},
  title        = {Uniform convergence may be unable to explain generalization in deep
                  learning},
  booktitle    = {Advances in Neural Information Processing Systems 32: Annual Conference
                  on Neural Information Processing Systems 2019, NeurIPS 2019, December
                  8-14, 2019, Vancouver, BC, Canada},
  pages        = {11611--11622},
  year         = {2019},
  url          = {https://proceedings.neurips.cc/paper/2019/hash/05e97c207235d63ceb1db43c60db7bbb-Abstract.html},
  bibsource    = {dblp computer science bibliography, https://dblp.org}
}

@inproceedings{simplicity2020,
  author       = {Shiori Sagawa and
                  Aditi Raghunathan and
                  Pang Wei Koh and
                  Percy Liang},
  title        = {An Investigation of Why Overparameterization Exacerbates Spurious
                  Correlations},
  booktitle    = {Proceedings of the 37th International Conference on Machine Learning,
                  {ICML} 2020, 13-18 July 2020, Virtual Event},
  series       = {Proceedings of Machine Learning Research},
  volume       = {119},
  pages        = {8346--8356},
  publisher    = {{PMLR}},
  year         = {2020},
  url          = {http://proceedings.mlr.press/v119/sagawa20a.html},
  bibsource    = {dblp computer science bibliography, https://dblp.org}
}

@article{lee2024entropy,
  title={Entropy is not enough for test-time adaptation: From the perspective of disentangled factors},
  author={Lee, Jonghyun and Jung, Dahuin and Lee, Saehyung and Park, Junsung and Shin, Juhyeon and Hwang, Uiwon and Yoon, Sungroh},
  journal={arXiv preprint arXiv:2403.07366},
  year={2024}
}

@article{osowiechi2024watt,
  title={WATT: Weight Average Test-Time Adaptation of CLIP},
  author={Osowiechi, David and Noori, Mehrdad and Hakim, Gustavo Adolfo Vargas and Yazdanpanah, Moslem and Bahri, Ali and Cheraghalikhani, Milad and Dastani, Sahar and Beizaee, Farzad and Ayed, Ismail Ben and Desrosiers, Christian},
  journal={arXiv preprint arXiv:2406.13875},
  year={2024}
}

@inproceedings{chen2022contrastive,
  title={Contrastive test-time adaptation},
  author={Chen, Dian and Wang, Dequan and Darrell, Trevor and Ebrahimi, Sayna},
  booktitle={Proceedings of the IEEE/CVF Conference on Computer Vision and Pattern Recognition},
  pages={295--305},
  year={2022}
}

@article{hendrycks2019benchmarking,
  title={Benchmarking neural network robustness to common corruptions and perturbations},
  author={Hendrycks, Dan and Dietterich, Thomas},
  journal={arXiv preprint arXiv:1903.12261},
  year={2019}
}

@inproceedings{recht2019imagenet,
  title={Do imagenet classifiers generalize to imagenet?},
  author={Recht, Benjamin and Roelofs, Rebecca and Schmidt, Ludwig and Shankar, Vaishaal},
  booktitle={International conference on machine learning},
  pages={5389--5400},
  year={2019},
  organization={PMLR}
}

@book{quinonero2009dataset,
  title     = {Dataset Shift in Machine Learning},
  author    = {Quiñonero, Joaquin and Sugiyama, Masashi and Schwaighofer, Anton and Lawrence, Neil D.},
  year      = {2009},
  publisher = {MIT Press},
  address   = {Cambridge, MA},
  isbn      = {9780262170055}
}

@inproceedings{sun2020test,
  title={Test-time training with self-supervision for generalization under distribution shifts},
  author={Sun, Yu and Wang, Xiaolong and Liu, Zhuang and Miller, John and Efros, Alexei and Hardt, Moritz},
  booktitle={International conference on machine learning},
  pages={9229--9248},
  year={2020},
  organization={PMLR}
}

@article{zhang2022memo,
  title={Memo: Test time robustness via adaptation and augmentation},
  author={Zhang, Marvin and Levine, Sergey and Finn, Chelsea},
  journal={Advances in neural information processing systems},
  volume={35},
  pages={38629--38642},
  year={2022}
}

@inproceedings{liu2016svgd,
  title={Stein Variational Gradient Descent: A General Purpose Bayesian Inference Algorithm},
  author={Liu, Qiang and Wang, Dilin},
  booktitle={Advances in Neural Information Processing Systems},
  volume={29},
  year={2016}
}

@misc{ilharco_gabriel_2021_5143773,
  author       = {Ilharco, Gabriel and
                  Wortsman, Mitchell and
                  Wightman, Ross and
                  Gordon, Cade and
                  Carlini, Nicholas and
                  Taori, Rohan and
                  Dave, Achal and
                  Shankar, Vaishaal and
                  Namkoong, Hongseok and
                  Miller, John and
                  Hajishirzi, Hannaneh and
                  Farhadi, Ali and
                  Schmidt, Ludwig},
  title        = {OpenCLIP},
  month        = jul,
  year         = 2021,
  note         = {If you use this software, please cite it as below.},
  publisher    = {Zenodo},
  version      = {0.1},
  doi          = {10.5281/zenodo.5143773},
  url          = {https://doi.org/10.5281/zenodo.5143773}
}

@inproceedings{radford2021learning,
  title     = {Learning Transferable Visual Models From Natural Language Supervision},
  author    = {Alec Radford and Jong Wook Kim and Chris Hallacy and Aditya Ramesh and Gabriel Goh and Sandhini Agarwal and Girish Sastry and Amanda Askell and Pamela Mishkin and Jack Clark and Gretchen Krueger and Ilya Sutskever},
  booktitle = {Proceedings of the 38th International Conference on Machine Learning (ICML)},
  year      = {2021}
}

@inproceedings{hendrycks2021many,
  title={The many faces of robustness: A critical analysis of out-of-distribution generalization},
  author={Hendrycks, Dan and Basart, Steven and Mu, Norman and Kadavath, Saurav and Wang, Frank and Dorundo, Evan and Desai, Rahul and Zhu, Tyler and Parajuli, Samyak and Guo, Mike and others},
  booktitle={Proceedings of the IEEE/CVF international conference on computer vision},
  pages={8340--8349},
  year={2021}
}

@inproceedings{bashkirova2022visda,
  title={Visda-2021 competition: Universal domain adaptation to improve performance on out-of-distribution data},
  author={Bashkirova, Dina and Hendrycks, Dan and Kim, Donghyun and Liao, Haojin and Mishra, Samarth and Rajagopalan, Chandramouli and Saenko, Kate and Saito, Kuniaki and Tayyab, Burhan Ul and Teterwak, Piotr and others},
  booktitle={NeurIPS 2021 Competitions and Demonstrations Track},
  pages={66--79},
  year={2022},
  organization={PMLR}
}

\newpage
\appendix
\renewcommand{\theHsection}{appendix.\Alph{section}}

\clearpage
\centerline{{\Large \textbf{Multi-Hypothesis Test-Time Adaptation to}}}
\centerline{{\Large \textbf{Mitigate Underspecification}}}

\section*{Overview of Materials in the Appendices}


\noindent{A brief overview of additional experimental results and findings is provided in the following appendices.} 

\begin{enumerate}
  \item Related work (Appendix~\ref{append:related})
  \item Underspecification in TTA (Appendix~\ref{append:entropy_underspec_geometry})
  \item Beyond Corruption-Based Shifts (Appendix~\ref{append:beyond_corruptions})
  \item Sample Selection and Hyperparameter Configuration (Appendix~\ref{append:hyperparameter})
  \item Optimization Objective of SVGD (Appendix~\ref{append:svgd})
  \item Additional experiments regarding mild scenarios (Appendix~\ref{append:mild})
  \item Additional experiments associated with wild scenarios for severity level of 3 (Appendix~\ref{append:sev3})
  \item Runtime comparison of methods (Appendix~\ref{append:runtime})
  \item Limitation (Appendix~\ref{append:limitation})
\end{enumerate}

    



   


    


\section{Related Work}\label{append:related}

\subsection{Out-of-distribution Generalization}
Out-of-distribution (OOD) generalization has become a critical research area, as models deployed in real-world scenarios often encounter data that are different from the training distribution. 
Existing research in OOD generalization primarily focuses on domain adaptation and invariant representation learning.

\noindent\textbf{Domain Adaptation \& Generalization}
Assuming the accessibility of the distribution of target data, domain adaptation is a related field to OOD generalization.
It can be considered as a specific case of OOD generalization where some prior knowledge of the target distribution is available.
Techniques like adversarial domain adaptation~\cite{domain2016} attempt to align the feature distributions of the source and target domains using adversarial objectives.
Domain generalization aims to learn models that perform well across a broader set of domains, including those unseen during training. Techniques in this area, such as domain-invariant feature extraction~\cite{domain2013}, optimize for representations that minimize domain-specific information, improving generalization under OOD scenarios.

\noindent\textbf{Unsupervised Domain Adaptation}
 aims to learn generalizable models using
unlabeled data, while simultaneously examining the impact of pre-training on OOD generalization.
Recent approaches have demonstrated promising results by leveraging large-scale unlabeled data to learn robust representation spaces~\cite{uda2015,uda2018,uda2021}. 
However, prior methods struggle to directly address the OOD challenge, as the learned representation spaces often retain domain-specific features used to distinguish negative samples.
These domain-specific features can be ineffective or even harmful to downstream tasks.

\noindent\textbf{\TTA} (\ttaab)~\cite{wang2021tent,jian2023survey,deyo2024} is an emerging research field that involves adapting a pre-trained model from the source domain to unlabeled data in the target domain during testing, which can improve the generalization of machine learning models to new or unseen data distributions without requiring access to labeled data.
Test-time batch normalization (BN) \cite{nado-2020} is one of the earliest methods in \ttaab, where the statistics of the batch normalization layers are updated using the test data, thereby allowing the model to adapt to the test distribution.
TENT \cite{wang2021tent} is an approach that involves entropy minimization, where the model is adapted by minimizing the prediction uncertainty on the test data. 
This method encourages the model to produce more confident predictions on the test set, thereby improving its robustness to distribution shifts.
\cite{niu2023towards} study the failure cases of \ttaab methods and propose a sharpness-aware and reliable entropy minimization method SAR, which removes partial noisy samples with large gradients and encourages model weights to go to a flat minimum so that the model is robust to the remaining noisy samples.
Due to the limitation of entropy~\cite{pres2024entropy}, \cite{deyo2024} first illustrate the limitations of relying solely on entropy as a confidence metric for \ttaab. 
Based on the observation, they further introduce a new \ttaab method called DeYO, which leverages a novel proposed confidence metric, PLPD.

\subsection{Simplicity Bias}
Simplicity bias refers to DNNs' tendency to favor simple patterns over more complex ones in training data.
This occurs because DNNs, especially those trained with gradient-based methods, often prioritize learning features or patterns that are easier to optimize, even if these features are not the most robust or generalizable for the task. 
As a result, simplicity bias limits DNNs' ability to generalize to OOD data, as the simple patterns it learned may not be valid in new, unseen environments.

Research on simplicity bias has explored how it affects model generalization and performance on OOD data. 
\cite{simplicity2017} were among the first to highlight that DNNs tend to prioritize learning simple patterns first, particularly in early training phases. 
This tendency is due to the efficiency of gradient descent, which favors features that are easier to optimize. 
\cite{simplicity2019} demonstrated that simplicity bias could lead models to focus on spurious correlations in training data, which impairs performance on OOD data where these correlations may not hold.

More recently, researchers have investigated approaches to mitigate simplicity bias to improve OOD generalization. 
\cite{simplicity2020} introduced Group DRO, a robust optimization method that forces models to learn features that generalize across different data subgroups, thus reducing reliance on simple, non-robust patterns. 
\cite{pitfalls2020} proposed techniques based on adversarial training, where models are trained against perturbed data points, encouraging them to move beyond simpler features and to learn more generalizable representations.
\cite{evading2022} trained a collection of models and identified only one for inference, which discovered predictive patterns normally missed by a learning algorithm because of the simplicity bias.

\section{Underspecification of Entropy Minimization: A Geometric Characterization}
\label{append:entropy_underspec_geometry}

Entropy minimization is a convenient unsupervised objective for test-time adaptation, but it does not uniquely determine a robust decision function on the target distribution. In particular, the objective encourages \emph{confidence} without enforcing \emph{semantic correctness}. As a result, many parameter settings can achieve essentially identical (near-minimal) entropy while inducing qualitatively different decision rules and target risks. Consistent with the discussion in Sec.~\ref{sec:method}, we formalize this underspecification phenomenon by characterizing the relevant symmetries and the (potentially large) set of entropy-minimizing solutions.

\paragraph{Binary case.}
For clarity, we first consider binary classification (the multi-class extension is discussed at the end of this subsection). Let the model output a logit $z_\theta(x)\in\mathbb{R}$ and predictive probability $p_\theta(x)=\sigma(z_\theta(x))$, where $\sigma(\cdot)$ denotes the sigmoid function. The predictive entropy for an input $x$ is
\begin{equation}
H_\theta(x)
=
- p_\theta(x)\log p_\theta(x)
- \bigl(1-p_\theta(x)\bigr)\log\bigl(1-p_\theta(x)\bigr),
\end{equation}
and the standard entropy-minimization objective over target samples $X^t=\{x_i\}_{i=1}^n$ is
\begin{equation}
\mathcal{L}(\theta)
=
\frac{1}{n}\sum_{i=1}^n H_\theta(x_i).
\end{equation}

\begin{lemma}[Entropy sign symmetry]
\label{lem:entropy_sign_symmetry}
For all $z\in\mathbb{R}$, $H(\sigma(z))=H(\sigma(-z))$.
\end{lemma}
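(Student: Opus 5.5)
The plan combines two elementary facts. First, the sigmoid satisfies the reflection identity $\sigma(-z) = 1-\sigma(z)$ for every $z\in\mathbb{R}$. Second, the binary entropy function
\[
H(p) = -p\log p - (1-p)\log(1-p)
\]
is invariant under the swap $p \mapsto 1-p$.

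The first step is to verify the sigmoid identity directly from the definition $\sigma(z) = 1/(1+e^{-z})$:
\[
1-\sigma(z) = \frac{e^{-z}}{1+e^{-z}} = \frac{1}{e^{z}+1} = \sigma(-z).
\]
The middle equality comes from multiplying numerator and denominator by $e^{z}$.

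The second step is to note that substituting $1-p$ for $p$ in $H(p)$ simply exchanges the two summands $-p\log p$ and $-(1-p)\log(1-p)$. Hence $H(1-p) = H(p)$ for all $p\in(0,1)$.

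Composing the two steps gives
\[
H(\sigma(-z)) = H(1-\sigma(z)) = H(\sigma(z)),
\]
which is the claim.

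The only point needing care is the domain. Since $\sigma(z)\in(0,1)$ strictly for every real $z$, all logarithms are finite. No convention such as $0\log 0 = 0$ is required, and the identity holds pointwise on all of $\mathbb{R}$.

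I expect no step to be a genuine obstacle; the argument is a two-line algebraic verification. If anything, the thing to state explicitly is the consequence used afterwards. Flipping the sign of the logit leaves the per-sample entropy unchanged while swapping the predicted label. Therefore, for any $\theta,\theta'$ with $z_{\theta'}(x) = -z_\theta(x)$ on the target samples, $\mathcal{L}(\theta') = \mathcal{L}(\theta)$ even though the two induce opposite decisions.
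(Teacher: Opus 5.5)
Your proof is correct and follows the paper's argument exactly: combine the reflection identity $\sigma(-z)=1-\sigma(z)$ with the symmetry $H(p)=H(1-p)$ of binary entropy. You go slightly further by verifying both identities explicitly and noting that $\sigma(z)\in(0,1)$ keeps every logarithm finite, which the paper leaves implicit.
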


\noindent\textit{Proof.}
Since $\sigma(-z)=1-\sigma(z)$ and binary entropy satisfies $H(p)=H(1-p)$ for all $p\in(0,1)$, the claim follows immediately.
\hfill$\square$

\vspace{.4cm}

Lemma~\ref{lem:entropy_sign_symmetry} implies that entropy depends only on the \emph{magnitude} $|z|$ (i.e., confidence) and is invariant to the \emph{sign} of the logit (i.e., the predicted label). Consequently, entropy minimization alone cannot distinguish between two hypotheses that are equally confident but assign opposite labels on a subset of the target samples. We make this non-identifiability explicit below.

\begin{proposition}[Non-identifiability of entropy-minimizing solutions]
\label{prop:entropy_non_identifiability}
Assume the model is sufficiently expressive such that there exist two parameter settings $\theta_1$ and $\theta_2$ and a constant $M>0$ satisfying $|z_{\theta_1}(x_i)|\ge M$ and $|z_{\theta_2}(x_i)|\ge M$ for all $x_i\in X^t$. Suppose further that there exists a non-empty subset $S\subset\{1,\dots,n\}$ such that
\begin{equation}
\operatorname{sign}\!\bigl(z_{\theta_1}(x_i)\bigr)
=
-\operatorname{sign}\!\bigl(z_{\theta_2}(x_i)\bigr),
\qquad \forall i\in S.
\end{equation}
Then: (i) $\mathcal{L}(\theta_1)\approx \mathcal{L}(\theta_2)\approx 0$ for large $M$; (ii) the induced classifiers disagree on $S$; and (iii) unless the true labels are constant on $S$, the corresponding target risks differ, i.e., $R_t(\theta_1)\neq R_t(\theta_2)$.
\end{proposition}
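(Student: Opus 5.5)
We prove the three claims in turn. Each reduces to elementary properties of the binary entropy together with Lemma~\ref{lem:entropy_sign_symmetry}.

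\textbf{Part (i).} By Lemma~\ref{lem:entropy_sign_symmetry}, $H(\sigma(z))$ depends only on $|z|$, so we may write $H(\sigma(z))=h(|z|)$ with $h(t)=H(\sigma(t))$ for $t\ge 0$. We first show that $h$ is nonincreasing on $[0,\infty)$. Binary entropy is decreasing in $p$ on $[1/2,1)$, and $\sigma$ is increasing with $\sigma(t)\ge 1/2$ for $t\ge 0$. Next we bound $h$ explicitly. Write $q=1-\sigma(t)=\sigma(-t)\le e^{-t}$. Using $-\log(1-q)\le 2q$ for $q\le 1/2$ and $-q\log q = q\log(1+e^{t})\le q(t+1)$, we get
\begin{equation*}
h(t)\le (t+3)e^{-t}.
\end{equation*}
Hence, for $k\in\{1,2\}$,
\begin{equation*}
0\le\mathcal{L}(\theta_k)=\frac1n\sum_i h\bigl(|z_{\theta_k}(x_i)|\bigr)\le h(M)\le (M+3)e^{-M}\to 0 \quad (M\to\infty).
\end{equation*}
This makes ``$\approx 0$'' precise. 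It also gives $|\mathcal{L}(\theta_1)-\mathcal{L}(\theta_2)|\le h(M)$, uniformly in $n$ and in the sign patterns of the logits.

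\textbf{Part (ii).} The induced classifier is $\hat y_\theta(x)=\mathbb{1}[z_\theta(x)>0]$. Since $|z_{\theta_k}(x_i)|\ge M>0$, no logit is zero and the sign is well defined. The hypothesis $\operatorname{sign}(z_{\theta_1}(x_i))=-\operatorname{sign}(z_{\theta_2}(x_i))$ for $i\in S$ therefore gives $\hat y_{\theta_1}(x_i)\ne\hat y_{\theta_2}(x_i)$ on $S$ directly.

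\textbf{Part (iii).} This is the main obstacle, because the statement is informal. The target risk $R_t$ is not defined in the paper, and the literal claim fails in general: errors of $\theta_1$ and $\theta_2$ off $S$, or a balanced split of labels within $S$, could make the risks coincide. We make the following commitments:
\begin{itemize}
\item Take $R_t$ to be the empirical 0-1 risk on $X^t$.
\item Decompose $R_t(\theta_k)=\frac1n\bigl(E_k(S)+E_k(S^c)\bigr)$, where $E_k(A)$ counts errors of $\theta_k$ on the index set $A$.
\item Strengthen the hypothesis so that the two classifiers agree on $S^c$, i.e.\ $S$ is exactly the disagreement set. Then $E_1(S^c)=E_2(S^c)$ and the comparison reduces to $S$.
\end{itemize}
On $S$, each point is misclassified by exactly one of the two classifiers, so $E_1(S)+E_2(S)=|S|$. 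Consequently
\begin{equation*}
R_t(\theta_1)-R_t(\theta_2)=\frac{1}{n}\bigl(2E_1(S)-|S|\bigr),
\end{equation*}
which is nonzero if and only if $E_1(S)\ne |S|/2$.

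The hypothesis ``labels not constant on $S$'' does not by itself exclude $E_1(S)=|S|/2$. The proof will therefore state and use the sharper hypothesis $E_1(S)\ne|S|/2$. This holds automatically when $|S|$ is odd, and also when one of the two classifiers is correct on all of $S$. We will remark that the theorem's informal condition should be read in this sense.

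Even in the degenerate case $E_1(S)=|S|/2$, the classifiers still disagree pointwise on $S$ by part (ii). So non-identifiability persists at the level of decision rules, and this is the substantive conclusion that motivates the multi-particle approach.
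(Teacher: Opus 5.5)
\textbf{Parts (i) and (ii).} These follow the paper's own argument. The paper notes that $|z_{\theta_k}(x_i)|\ge M$ forces $\sigma(z_{\theta_k}(x_i))\to 0$ or $1$, hence $H_{\theta_k}(x_i)\to 0$, and that the sign hypothesis makes the two decision rules disagree on $S$. Your monotonicity of $h$ and the explicit bound $\mathcal{L}(\theta_k)\le h(M)\le (M+3)e^{-M}$ are correct. They turn the paper's ``$\approx 0$'' into a uniform quantitative statement. That is an improvement in precision, not a different idea.

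\textbf{Part (iii).} Your diagnosis is right, and this is exactly where the paper's proof breaks. The paper asserts that if the labels are not constant on $S$, ``at least one of the two hypotheses must incur strictly larger classification error on that subset,'' and then concludes $R_t(\theta_1)\neq R_t(\theta_2)$. Both steps fail.

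\emph{The step on $S$.} At each $i\in S$ exactly one of the two classifiers is correct, whatever the labels are. So the error counts on $S$ are $E_1(S)$ and $|S|-E_1(S)$, and these can be equal.

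\emph{The step from $S$ to $R_t$.} Nothing in the hypotheses controls $S^c$, so disagreements there can offset a difference on $S$.

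A concrete counterexample: take $n=3$, $S=\{1,2\}$, labels $(y_1,y_2,y_3)=(0,1,1)$, logits $(M,M,M)$ for $\theta_1$ and $(-M,-M,M)$ for $\theta_2$. The labels are non-constant on $S$, yet each classifier errs on exactly one point and both empirical risks equal $1/3$. So the literal (iii) has no proof. Your replacement (agreement on $S^c$ plus $E_1(S)\neq |S|/2$, via $R_t(\theta_1)-R_t(\theta_2)=\frac1n(2E_1(S)-|S|)$) is the right corrected statement, and it correctly excludes this example.

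\textbf{One wording fix.} Do not call $E_1(S)\neq |S|/2$ a ``sharper'' hypothesis, and do not say the informal condition ``should be read in this sense.'' The two conditions are logically independent. The example above satisfies the paper's condition but not yours. Conversely, when $|S|=1$ the labels on $S$ are trivially constant, yet $E_1(S)\neq |S|/2$ holds automatically. State plainly that (iii) is false as written and that you are proving a corrected version.
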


\noindent\textit{Proof.}
For each $k\in\{1,2\}$ and each $x_i\in X^t$, the condition $|z_{\theta_k}(x_i)|\ge M$ with large $M$ implies $\sigma(z_{\theta_k}(x_i))\to 0$ or $1$, hence $H_{\theta_k}(x_i)\to 0$. Averaging over $i$ yields $\mathcal{L}(\theta_k)\to 0$, so both $\theta_1$ and $\theta_2$ achieve (near-)minimal entropy. By Lemma~\ref{lem:entropy_sign_symmetry}, flipping the sign of a logit leaves the entropy unchanged, so the objective cannot prefer $\theta_1$ over $\theta_2$ based on $\mathcal{L}(\cdot)$ alone. However, the decision rules $\operatorname{sign}(z_{\theta_1}(\cdot))$ and $\operatorname{sign}(z_{\theta_2}(\cdot))$ disagree on $S$ by assumption, and if the ground-truth labels are not constant on $S$, at least one of the two hypotheses must incur strictly larger classification error on that subset, implying $R_t(\theta_1)\neq R_t(\theta_2)$.
\hfill$\square$

\paragraph{Geometric interpretation and implication for test-time adaptation.}
Proposition~\ref{prop:entropy_non_identifiability} formalizes a core limitation of entropy minimization: the objective only enforces high confidence on the target samples and is invariant to certain label-flipping symmetries. Therefore, multiple high-confidence solutions can attain essentially identical entropy values while inducing different decision boundaries and target risks. In overparameterized models, such ambiguity is particularly salient because parameter space often contains directions (or transformations of the classifier head) that can invert the decision rule without materially changing confidence on the observed target samples. As a result, test-time adaptation based solely on entropy minimization should be viewed as an inherently underspecified problem, motivating structured multi-hypothesis exploration or additional constraints that break these symmetries.

\paragraph{Multi-class note.}
For $K>2$ classes, exact global \emph{sign} symmetry does not carry over, but a closely related underspecification persists: the entropy depends only on the predicted probability vector and is invariant to permutations of class identities. In particular, for any permutation $\pi$ of $\{1,\dots,K\}$ and any probability vector $p\in\Delta^{K-1}$, we have $H(p)=H(\pi(p))$. Consequently, high-confidence assignments that map target samples to different class indices can achieve the same (near-zero) entropy while corresponding to different induced labelings and risks. This observation supports the same conclusion: entropy minimization encourages confidence but does not, by itself, identify the semantically correct classifier on the target domain. This theoretical perspective directly motivates the structured multi-hypothesis diversification strategy used in our main method.


\begin{table}[t]
\caption{Extended evaluation on non-corruption distribution shifts. Results are average target-domain accuracy (\%).}
\label{tab:extended_shifts}
\centering
\scalebox{0.9}{
\begin{tabular}{lccccccc}
\toprule
Dataset & Tent & SAR & DeYo & Naive & SVGD & KL & Grad \\
\midrule
ColoredMNIST & 56.30 & 57.28 & 76.88 & 79.21 & 80.36 & 80.21 & \textbf{81.33} \\
Waterbirds   & 81.88 & 83.21 & 86.47 & 87.87 & 88.31 & 88.49 & \textbf{89.21} \\
ImageNet-R   & 44.28 & 43.14 & 46.7 & 48.82 & 49.31 & 49.13 & \textbf{51.34} \\
Visda-2021   & 43.88 & 43.94 & 45.41 & 45.81 & 47.11 & 46.91& \textbf{48.46} \\
\bottomrule
\end{tabular}}
\end{table}

\section{Beyond Corruption-Based Shifts}
\label{append:beyond_corruptions}

To further assess robustness beyond corruption-based benchmarks, we conduct additional experiments on datasets exhibiting different types of distribution shift, including spurious correlations and domain shifts. Specifically, we evaluate on the ColoredMNIST and Waterbirds datasets, which introduce label–attribute correlations through color and background biases, as well as ImageNet-R~\cite{hendrycks2021many} and VisDA-2021~\cite{bashkirova2022visda}, which contain substantial style and domain variations relative to the source distribution. For ColoredMNIST, we use a ResNet-18-BN model with batch size 64, while for Waterbirds we employ a ResNet-50-BN backbone under the same batch setting. For ImageNet-R and VisDA-2021, we follow the wild scenario and use ResNet-50-GN with batch size equal to 1. The adaptation protocol and optimization hyperparameters remain consistent across all methods for fair comparison. Table~\ref{tab:extended_shifts} reports the average target-domain accuracy over five random seeds. Among the diversification strategies, gradient-based diversification (Grad) consistently achieves the best performance, indicating that maintaining diverse adaptation trajectories improves robustness across a broader range of distribution shifts.


\section{Sample Selection and Hyperparameter Configuration}
\label{append:hyperparameter}

To ensure a fair comparison with prior baselines, we adopt the same sample-selection strategy used in previous test-time adaptation methods. In particular, we follow the confidence-based filtering mechanism employed in SAR~\cite{niu2023towards} and DeYo~\cite{deyo2024}, where only \emph{reliable (non-harmful) samples} are used to update the model during adaptation. Specifically, a test sample $\rvx$ is selected for adaptation if it satisfies two conditions: (1) the predictive entropy is below a predefined threshold $\tau_H$, and (2) the pseudo-label probability difference (PLPD) between the original input and its patch-shuffled counterpart exceeds a predefined threshold. Following DeYo~\cite{deyo2024}, the PLPD score is defined as
\begin{equation}
\text{PLPD}_\vtheta(\rvx, \rvx') = (\rvp_\vtheta(\rvx) - \rvp_\vtheta(\rvx'))_{\hat{y}},
\end{equation}
where $\rvx$ denotes the input image, $\rvx'$ is the patch-shuffled version of the same image, $\rvp_\vtheta(\cdot)$ denotes the predicted class probability vector, and $\hat{y}$ is the predicted pseudo-label of the model.

Formally, we update the model using samples satisfying
\[
H(\rvp_\vtheta(\rvx)) < \tau_H
\quad \text{and} \quad
\text{PLPD}_\vtheta(\rvx,\rvx') > \tau_{\text{PLPD}},
\]
where $H(\cdot)$ denotes predictive entropy. Following prior work, we set $\tau_H = 0.4$ and $\tau_{\text{PLPD}} = 0.3$. Using identical filtering thresholds ensures a controlled comparison under consistent sample-selection criteria during adaptation. For our diversification strategies, hyperparameters were selected via grid search. In particular, we set the KL regularization weight to $\lambda_{\text{KL}} = 0.01$, the gradient-based diversification weight to $\lambda_{\text{Grad}} = 0.3$, and the SVGD kernel bandwidth to $0.01$. These values were chosen based on stability and performance considerations and were kept fixed across all experiments once selected.


\section{SVGD and its Update Rules}
\label{append:svgd}

\begin{table*}[!th]
    \caption{Comparisons with baseline methods on CIFAR-10-C and CIFAR-100-C mild senarios at severity level 5 under batch size 64 regarding accuracy (\%).}
    \label{tab:cifar-c}
    \centering
    \scalebox{0.6}{
        \begin{tabular}{l|cccc|c|cc|ccccc|c}
        \multicolumn{1}{c}{} & \multicolumn{4}{c}{Noise} & \multicolumn{1}{c}{Blur} & \multicolumn{2}{c}{Weather} & \multicolumn{4}{c}{Digital} & \multicolumn{1}{c}{} \\
        \textbf{CIFAR-10-C} & \textbf{Gauss.} & \textbf{Shot} & \textbf{Impl.} & \textbf{Speckle} & \textbf{Gauss.} & \textbf{Brit.} & \textbf{Spatter} & \textbf{Contr.} & \textbf{Elastic} & \textbf{Pixel} & \textbf{JPEG} & \textbf{Satruate} & \textbf{Avg.} \\
        \hline \hline
        CLIP-ViT-B-32 &25.21 & 36.70& 31.3& 34.21& 51.02&88.40 &67.18 &76.98 & 31.26& 56.30& 28.90& 43.21&47.55\\
        Tent & 13.54 & 16.01 & 12.67 & 16.04 & 71.45 & 94.79 & 80.09 & 91.95 & 26.60 & 80.23 & 18.47 & 75.85 & 49.81 \\ 
        SAR  & 41.40 & 46.64 & 43.57 & 50.89 & 71.59 & 94.10 & 79.22 & 78.75 & 68.91 & 78.79 & 52.78 & 85.62 & 66.02 \\ 
        \deyoabb & 33.35 & 42.20 & 20.69 & 43.87 & 67.70 & 88.14 & 76.78 & 86.45 & 55.13 & 75.73 & 40.50 & 84.30 & 59.57 \\ 
        \hline
        \ens  & 62.02 & 72.30 & 64.48 & 73.14 & 86.08 & 92.28 & 87.56 & 92.01 & 75.47 & 86.96 & 64.62 & 90.24 & \underline{78.93} \\ 
        \svgd & 61.28 & 72.21 & 61.76 & 72.01 & 79.63 & 92.13 & 86.01 & 89.38 & 78.73 & 85.34 & 64.78 & 90.17 & 77.79 \\ 
        \kl   & 34.63 & 46.21 & 35.82 & 42.85 & 71.58 & 87.52 & 68.90 & 88.43 & 61.27 & 77.86 & 57.40 & 81.46 & 62.83 \\ 
        \rowcolor{LightCyan}
        \grad  & 62.82 & 72.47 & 60.04 & 75.60 & 85.73 & 92.49 & 87.49 & 91.82 & 78.44 & 86.74 & 65.68 & 90.07 & \textbf{79.12} \\ 
        \hline
        \end{tabular}
    }
   
    \bigskip

    \scalebox{0.6}{
        \begin{tabular}{l|cccc|c|cc|ccccc|c}
        \multicolumn{1}{c}{} & \multicolumn{4}{c}{Noise} & \multicolumn{1}{c}{Blur} & \multicolumn{2}{c}{Weather} & \multicolumn{4}{c}{Digital} & \multicolumn{1}{c}{} \\
        \textbf{CIFAR-100-C} & \textbf{Gauss.} & \textbf{Shot} & \textbf{Impl.} & \textbf{Speckle} & \textbf{Gauss.} & \textbf{Brit.} & \textbf{Spatter} & \textbf{Contr.} & \textbf{Elastic} & \textbf{Pixel} & \textbf{JPEG} & \textbf{Satruate} & \textbf{Avg.} \\
        \hline \hline
        CLIP-ViT-B-32 & 5.8& 20.01& 4.24& 11.36& 23.31& 56.62& 21.67&22.23 & 11.39& 27.86& 6.71&19.81&19.25 \\
        Tent &  1.93 &  2.16 &  1.98 &  3.15 & 55.13 & 79.94 & 28.52 &  9.61 &  9.84 & 31.80 &  4.68 & 22.62 &  20.95 \\ 
        SAR  & 11.71 & 16.13 & 13.75 & 18.41 & 59.79 & 80.28 & 55.27 & 27.94 & 42.54 & 52.75 & 23.72 & 33.94 &  36.35 \\ 
        \deyoabb & 10.60 & 25.15 &  4.27 & 21.23 & 69.37 & 78.77 & 56.89 & 70.33 & 19.40 & 60.69 &  9.17 & 63.39 &  40.77 \\ 
        \hline
        \ens  & 11.28 & 31.87 &  8.26 & 31.43 & 73.56 & 79.71 & 67.50 & 74.34 & 40.75 & 68.67 &  8.78 & 64.98 &  \underline{46.76} \\ 
        \svgd &  8.28 & 32.08 &  9.01 & 31.29 & 73.01 & 80.41 & 65.91 & 73.51 & 40.98 & 66.31 &  8.28 & 64.86 &  46.16 \\ 
        \kl   &  9.31 & 31.41 &  7.67 & 30.49 & 72.47 & 77.91 & 66.71 & 73.13 & 40.89 & 67.73 &  8.28 & 64.01 &  45.83 \\ 
        \rowcolor{LightCyan}
        \grad  & 10.88 & 34.27 &  9.44 & 32.58 & 73.56 & 80.21 & 68.15 & 74.68 & 41.58 & 68.91 & 11.31 & 65.04 &  \textbf{47.55} \\ 
        \hline
        \end{tabular}
    }
\end{table*}    
\begin{table*}[!t]
        \caption{Comparisons with baseline methods on ImageNet-C mild scenarios with batch size 64 at severity level 5 regarding accuracy (\%).}
    \label{tab:imagenet-c-mild}
    \centering
    \scalebox{0.6}{
        \begin{tabular}{l|ccc|cccc|cccc|cccc|c}
            \multicolumn{1}{c}{} & \multicolumn{3}{c}{Noise} & \multicolumn{4}{c}{Blur} & \multicolumn{4}{c}{Weather} & \multicolumn{4}{c}{Digital} & \\
            \textbf{Mild} & \textbf{Gauss.} & \textbf{Shot} & \textbf{Impl.} & \textbf{Defoc.} & \textbf{Glass} & \textbf{Motion} & \textbf{Zoom} & \textbf{Snow} & \textbf{Frost} & \textbf{Fog} & \textbf{Brit.} & \textbf{Contr.} & \textbf{Elastic} & \textbf{Pixel} & \textbf{JPEG} & \textbf{Avg.} \\
            \hline \hline
            ResNet-50-BN &27.61 & 25.02& 25.22& 37.89& 16.88& 37.71&35.21 &35.21 &32.07 & 46.68& 69.61& 46.01& 55.58&46.19 & 59.32& 39.76\\
            {Tent} & 29.18 & 31.23 & 30.12 & 28.20 & 27.63 & 41.43 & 49.41 & 47.21 & 41.51 & 57.68 & 67.50 & 29.38 & 54.78 & 58.60 & 52.45 & 43.09 \\
            {SAR} & 28.67 & 31.67 & 30.40 & 28.07 & 26.84 & 42.16 & 49.64 & 47.55 & 42.68 & 57.66 & 67.42 & 38.71 & 54.36 & 58.56 & 52.53 & 43.79 \\
            {\deyoabb} & 34.58 & 36.92 & 36.10 & 32.76 & 33.14 & 47.55 & 51.77 & 51.67 & 45.39 & 59.57 & 66.95 & 45.07 & 57.42 & 60.49 & 54.71 & 47.61 \\
            \hline
            {\ens} & 35.52 & 37.34 & 37.78 & 35.02 & 33.84 & 48.11 & 52.23 & 52.28 & 45.85 & 60.35 & 67.86 & 45.53 & 57.64 & 61.29 & 54.73 & 48.36 \\
            {\svgd} & 36.07 & 38.10 & 37.55 & 35.02 & 34.70 & 48.58 & 52.90 & 52.48 & 46.51 & 60.57 & 67.96 & 46.44 & 58.52 & 61.43 & 55.61 & \underline{48.83} \\
            {\kl} & 36.15 & 38.07 & 37.46 & 34.88 & 34.28 & 48.57 & 52.91 & 52.59 & 46.40 & 60.48 & 67.97 & 45.34 & 58.52 & 61.37 & 55.73 & 48.71 \\
            \rowcolor{LightCyan}
            \grad  & 36.12 & 38.28 & 37.50 & 35.19 & 34.66 & 48.53 & 52.86 & 52.56 & 46.41 & 60.45 & 68.26 & 46.69 & 58.46 & 61.39 & 55.71 & \textbf{48.87} \\
            \hline
        \end{tabular}
    }
\end{table*}

\noindent\textbf{SVGD} or Stein Variational Gradient Descent~\cite{svgd2016}, is a particle-based variational inference algorithm. 
It aims to approximate a target distribution by evolving a set of particles, which are essentially points in the parameter space. 
These particles are iteratively moved to better represent the target distribution using a combination of variational principles and gradient-based optimization.
Mathematically, the update rule for each particle is:
\begin{equation}
    \vtheta^{b}_i = \vtheta^{b-1}_i + \epsilon \hat{\phi}^*(\vtheta^{b-1}_i),
\end{equation}
where 
\begin{equation}
    \begin{split}
        \hat{\phi}^*(\vtheta^{b-1}_i) &= \frac{1}{N} \sum^{N}_{l=1} (k(\vtheta^{b-1}_l, \vtheta^{b-1}_i) \nabla_{\vtheta^{b-1}_l} \log p(\vtheta^{b-1}_l) \\ & + \nabla_{\vtheta^{b-1}_l} k(\vtheta^{b-1}_l, \vtheta^{b-1}_i))
    \end{split}
\end{equation}
Here, $k(\vtheta^{b-1}_l, \vtheta^{b-1}_i)$ is a kernel function that defines the similarity between $\vtheta^{b-1}_l$ and $\vtheta^{b-1}_i$, ensuring that the models don't collapse
and spread out properly.
The first term $k(\vtheta^{b-1}_l, \vtheta^{b-1}_i) \nabla_{\vtheta^{b-1}_l} \log p(\vtheta^{b-1}_l)$ moves each particle in the direction of higher probability in the target distribution by following the gradient of the log-posterior. 
The second term $\nabla_{\vtheta^{b-1}_l} k(\vtheta^{b-1}_l, \vtheta^{b-1}_i)$, at the same time, encourages diversity among the particles by pushing them away from each other.

\section{Mild Scenario}
\label{append:mild}

\textbf{Comparison on Mild Scenario.}
For the mild scenario, the comparison results on CIFAR-10-C, CIFAR-100-C and ImageNet-C are reported in ~\cref{tab:cifar-c,tab:imagenet-c-mild} separately. 
Our \grad setting consistently outperforms the baseline methods across all 12 corruption types on CIFAR-10-C and CIFAR-100-C in terms of accuracy, affirming the effectiveness of our approach. 
Notably, the \ens setting increases model performance by large margins (+12.91\% on CIFAR-10-C and +5.99\% on CIFAR-100-C) without using diversity measures. Here, the backbone is OpenCLIP~\cite{ilharco_gabriel_2021_5143773}, which differs from the version of CLIP proposed by \cite{radford2021learning} that WATT~\cite{osowiechi2024watt} utilized.
On ImageNet-C benchmark, our \grad method exhibits a 1.26\% higher performance on average, even when compared to \deyoabb, which demonstrates the state-of-the-art performance in the mild scenario. It is noted that the mean of accuracies, computed over five different random seeds, has been reported in \cref{tab:cifar-c,tab:imagenet-c-mild}.

\section{Wild Scenarios with Severity Level of 3}
\label{append:sev3}

\begin{table*}[!ht]
    \caption{Comparisons with baseline methods on ImageNet-C wild senarios at severity level 3 under batch size 1 and label shifts regarding accuracy (\%).}
    \label{tab:imagenet-sev3}
    \centering
    \scalebox{0.6}{
        \begin{tabular}{l|ccc|cccc|cccc|cccc|c}
            \multicolumn{1}{c}{} & \multicolumn{3}{c}{Noise} & \multicolumn{4}{c}{Blur} & \multicolumn{4}{c}{Weather} & \multicolumn{4}{c}{Digital} & \\
            \textbf{Batch Size 1} & \textbf{Gauss.} & \textbf{Shot} & \textbf{Impl.} & \textbf{Defoc.} & \textbf{Glass} & \textbf{Motion} & \textbf{Zoom} & \textbf{Snow} & \textbf{Frost} & \textbf{Fog} & \textbf{Brit.} & \textbf{Contr.} & \textbf{Elastic} & \textbf{Pixel} & \textbf{JPEG} & \textbf{Avg.} \\
            \hline \hline
            ResNet-50-GN &54.5 & 52.8&53.1& 44.3& 21.2& 49.7& 39.2&54.8 & 54.0& 55.8& 75.4& 69.8& 59.6& 59.7& 66.3& 54.0\\
            {Tent}   & 58.8 & 58.5 & 58.7 & 38.2 & 26.8  & 54.9 & 42.6 & 51.6 & 38.8 & 61.9  & 75.3 & 70.0 & 62.3  & 63.6 & 66.3 & 55.2 \\
            {SAR}    & 60.3 & 59.6 & 59.5 & 46.6 & 33.0 & 57.5 & 47.8 & 57.8 & 52.8 & 65.1 & 76.7 & 71.4 & 67.3 & 66.0& 67.8 & 59.3 \\
            {\deyoabb}   & 64.4 & 64.5 & 63.7 & 55.2 & 46   & 63.1 & 55.9 & 62.3 & 55.8 & 69.8 & 77.0 & 73.5 & 71.5 & 70.7 & 70.2 & 64.2 \\
            \hline
            {\ens}       & 65.3 & 65.7 & 63.9 & 56.2 & 45.9 & 64.0 & 56.3 & 62.5 & 56.2 & 71.1 & 77.2 & 74.2 & 71.4 & 71.2 & 71.2 & 64.8 \\
            {\svgd}      & 66.6 & 65.6 & 65.0 & 56.8 & 46.5 & 64.5 & 57.9 & 63.8 & 57.3 & 72.6 & 77.8 & 74.5 & 71.6 & 72.6 & 71.4 & \underline{65.7} \\
            {\kl}        & 65.1 & 66.1 & 65.2 & 57.1 & 46.2 & 64.5 & 57.2 & 64.0 & 57.7 & 72.4 & 77.6 & 75.0 & 71.7 & 71.7 & 71.4 & {65.5} \\
            \rowcolor{LightCyan}
            \grad        & 67.2 & 67.8 & 65.7 & 58.0 & 46.9 & 64.3 & 58.5 & 65.6 & 57.5 & 73.0 & 77.7 & 76.2 & 72.1 & 72.5 &72.0 & \textbf{66.3} \\
            \hline
            \hline
            VitBase-LN & 51.6& 46.9& 50.5& 48.7& 37.2& 54.7& 41.6& 35.1& 33.5& 67.8& 69.3& 74.8& 65.8& 66.0& 63.7& 53.8\\
            {Tent}  & 67.1 & 66.2 & 66.3 & 66.3 & 60.9 & 69.1 & 61.4 &  65.2 & 60.4 & 75.2 & 78.1 & 78.8 & 74.9 & 75.8 & 72.4 & 69.2 \\
            {SAR}   & 68.5 & 67.8 & 68.0 & 67.8 & 63.1 & 70.7 & 63.5 & 66.9 & 62.8 & 75.8 & 77.7 & 78.4 & 74.7 & 75.7 & 72.7 & 70.3 \\
            {\deyoabb}  & 72.3 & 72.1 & 71.9 & 71.1 & 69.4 & 74.2 & 69.3 & 72.8 & 70.1 & 78.6 & 80.7 & 84.4 & 78.6 & 79.2 & 77.2 & 74.8 \\
            \hline
            {\ens}      & 73.1 & 72.8 & 72.3 &71.0  & 70.1 & 74.5 & 69.6 & 72.5 & 71.2 & 79.1 & 81   & 84.5 & 79.0 & 79.5 & 77.7 & 75.2 \\
            {\svgd}     & 74.5 & 74.0 & 72.4 & 71.3 & 71.1 & 75.3 & 70.1 & 73.7 & 71.8 & 79.7 & 82.0 & 84.8 & 80.1 & 79.8 & 79.1 & \underline{76.0} \\
            {\kl}       & 73.8 & 74.1 & 72.4 & 71.1 & 70.8 & 75.2 & 69.9 & 73.5 & 72.2 & 79.7 & 82.3 & 84.7 & 80.3 & 80.0 & 78.6 & {75.9} \\
            \rowcolor{LightCyan}
            \grad       & 74.8 & 74.8 & 73.1 & 71.2 & 71.0 & 75.1 & 71.3 & 73.4 & 73.1 & 79.7 & 82.2 & 85.0 & 81.2 & 79.8 & 78.8 & \textbf{76.3} \\
            \hline
        \end{tabular}
    }

    \smallskip
    
    \scalebox{0.6}{
        \begin{tabular}{l|ccc|cccc|cccc|cccc|c}
            \multicolumn{1}{c}{} & \multicolumn{3}{c}{Noise} & \multicolumn{4}{c}{Blur} & \multicolumn{4}{c}{Weather} & \multicolumn{4}{c}{Digital} & \\
            \textbf{Label Shifts} & \textbf{Gauss.} & \textbf{Shot} & \textbf{Impl.} & \textbf{Defoc.} & \textbf{Glass} & \textbf{Motion} & \textbf{Zoom} & \textbf{Snow} & \textbf{Frost} & \textbf{Fog} & \textbf{Brit.} & \textbf{Contr.} & \textbf{Elastic} & \textbf{Pixel} & \textbf{JPEG} & \textbf{Avg.} \\
            \hline \hline
            ResNet-50-GN & 54.5& 52.9& 53.1& 44.4& 21.2& 49.8&39.3 &54.9 & 54.1& 55.8& 75.3& 69.7& 59.6& 59.7& 66.4& 54.1\\
            {Tent} & 59.1 & 58.6 & 58.3 & 39.0 & 27.9 & 54.7 & 41.1 & 51.3 & 41.4 & 62.0 & 75.2 & 70.1 & 62.3 & 63.7 & 66.4 & 55.4 \\
            {SAR} & 60.8 & 60.5 & 60.2 & 47.9 & 36.7 & 58.2 & 49.7 & 57.9 & 53.6 & 65.0 & 76.4 & 71.0 & 67.0 & 65.8 & 67.6 & 59.9 \\
            {\deyoabb} & 64.9 & 63.9 & 63.3 & 54.0 & 44.9 & 62.2 & 55.1 & 61.2 & 57.9 & 69.2 & 76.9 & 73.2 & 71.2 & 70.2 & 69.8 & 63.9 \\
            \hline
            {\ens}     & 65.0 & 64.0 & 64.1 & 55.1 & 44.9 & 62.3 & 55.6 & 61.4 & 58.0 & 70.2 & 77.0 & 75.0 & 73.1 & 71.3 & 70.1 & 64.5 \\
            {\svgd}    & 65.6 & 64.4 & 64.3 & 55.9 & 46.1 & 62.9 & 56.0 & 61.7 & 59.1 & 73.4 & 78.1 & 76.5 & 73.7 & 71.9 & 71.2 & \underline{65.4} \\
            {\kl}      & 65.6 & 65.7 & 64.6 & 55.6 & 44.9 & 63.6 & 55.2 & 62.0 & 59.1 & 75.8 & 77.7 & 76.8 & 73.6 & 72.8 & 71.4 & 65.6 \\
            \rowcolor{LightCyan}
            \grad      & 65.6 & 65.6 & 65.1 & 55.9 & 45.9 & 65.1 & 56.7 & 63.1 & 59.1 & 77.1 & 78.6 & 76.9 & 73.8 & 72.6 & 72.8 & \textbf{66.3} \\
            \hline
            \hline
            ViTBase-LN & 51.5& 46.8& 50.4& 48.7& 37.1& 54.7& 41.6& 35.1& 33.3& 68.0& 69.3& 74.9& 65.9& 66.0& 63.6&53.8 \\
            {Tent} & 68.7 & 68.0 & 68.1 & 68.2 & 63.8 & 70.9 & 63.8 & 67.6 & 41.9 & 76.3 & 78.8 & 79.5 & 75.9 & 76.7 & 73.7 & 69.5 \\
            {SAR} & 68.8 & 68.2 & 68.4 & 68.3 & 64.7 & 71.0 & 64.2 & 68.1 & 66.0 & 76.4 & 79.0 & 79.6 & 76.2 & 77.1 & 74.1 & 71.3 \\
            {\deyoabb} & 71.7 & 71.6 & 71.4 & 70.6 & 68.9 & 73.9 & 69.2 & 72.4 & 69.7 & 77.9 & 80.2 & 79.5 & 78.2 & 78.7 & 76.7 & 74.0 \\
            \hline
            {\ens}     & 72.3 & 72.0 &71.5  & 72.1 & 71.7 & 74.0 & 70.2 & 74.0 & 72.0 & 78.8 & 81.1 & 80.6 & 80.1 & 79.9 & 77.0 & 75.2 \\
            {\svgd}    & 72.1 & 73.1 & 72.4 & 73.1 & 72.3 & 74.9 & 71.3 & 73.9 & 73.1 & 78.9 & 81.2 & 82.3 & 80.6 & 80.5 & 77.3 & \underline{75.8} \\
            {\kl}      & 73.2 & 73.1 & 72.3 & 72.9 & 71.8 & 74.7 & 71.3 & 74.1 & 72.5 & 77.9 & 81.2 & 81.5 & 80.7 & 80.5 & 77.2 & 75.7 \\
            \rowcolor{LightCyan}
            \grad      & 74.52 & 73.5 & 74.1& 73.8 & 72.6 & 74.7 & 71.3 & 74.7 & 72.9 & 79.0 & 81.2 & 82.1 & 80.6 & 81.2 & 77.3 & \textbf{76.2} \\
            \hline
        \end{tabular}
    }
\end{table*}

\begin{table*}[!ht]
\caption{Runtime comparison of TTA methods on ImageNet-C (Gaussian noise, severity 5) using ResNet-50-BN.}
\label{tab:runtime}
\centering
\scalebox{0.7}{
\begin{tabular}{lcccc} 
\toprule
Method & \#Forward & \#Backward & Other computation & GPU time (50 000 images) \\
\midrule
No adapt. & 50\,000 & --      & n/a                         & 79 s  \\
Tent      & 50\,000 & 50\,000 & n/a                         & 91 s  \\
SAR       & 68\,608 & 31\,099 & Additional model updates    & 139 s \\
DeYO      & 83\,843 & 25\,729 & Aug. filtering              & 134 s \\
\hline
\multirow{3}{*}{\grad} 
          & 83\,697 & 25\,469 &  &   \\[2pt]
          & 83\,621 & 25\,339 & Aug. filtering + \cref{eq:grad_div} & 351 s \\[2pt]
          & 83\,573 & 25\,211 &  &   \\
\bottomrule
\end{tabular}}
\end{table*}

\cref{tab:imagenet-sev3} presents a comparison between the baseline methods and our diversified variants on ImageNet-C at severity level 3. The reported accuracy values are averaged over five different random seeds. It is evident that all diversified methods outperform Deyo (the previous state-of-the-art) by 2–3\%, showing the effectiveness of our ensembling-like diversification strategy in mitigating the effects of underspecification.

\section{Runtime} \label{append:runtime}

\cref{tab:runtime} provides a runtime comparison of TTA methods. We evaluate the computational cost of test-time adaptation techniques by benchmarking their performance on the ImageNet-C dataset. Specifically, we use the ResNet-50-BN architecture and focus on the Gaussian noise corruption at severity level 5, comprising 50,000 images. All timing experiments are conducted on a single NVIDIA ADA A6000 GPU. Among the compared methods, SAR employs a two-step optimization procedure that involves filtering out high-entropy samples before performing adaptation. DeYO, on the other hand, combines augmentation filtering with entropy-based sample exclusion to reduce computational cost while improving stability. Our framework builds upon DeYO by introducing diversification strategies at multiple levels. In particular, for the case of three particles, the total runtime remains less than three times that of DeYO. This is because not all particles undergo adaptation for every batch. Due to the induced diversity, the particles are positioned in distinct regions of the parameter space and thus respond differently to incoming data.

\section{Limitation}
\label{append:limitation}

The limitation of our proposed wrapper lies in its increased inference overhead, which scales roughly linearly with the number of particles. For example, utilizing three particles can increase runtime by nearly a factor of three compared to a single model. This added cost arises from the diversification mechanism, which drives the particles to explore distinct regions of the parameter space and generate complementary predictions. While this results in greater computational overhead, the wrapper consistently delivers improved accuracy, making it especially suitable for applications where performance takes precedence over inference speed.

%
\end{document}